
\documentclass[journal]{IEEEtran}  

\IEEEoverridecommandlockouts                              


\usepackage{graphicx}
\usepackage{amssymb,amsmath,amsthm} 
\usepackage{algorithm,algpseudocode}
\usepackage[caption=false]{subfig}
\usepackage{booktabs}
\usepackage{tabularx}
\usepackage{bm}
\usepackage{url}
\usepackage{hyperref}
\hypersetup{
    colorlinks=true,
    linkcolor=blue,
    filecolor=blue,      
    urlcolor=blue,
    citecolor=blue,
    pdftitle={Multi-query Robotic Manipulator Task Sequencing with Gromov-Hausdorff Approximations}
    }
\usepackage[dvipsnames]{xcolor}
\usepackage[normalem]{ulem}
\usepackage{balance}
\newtheorem{theorem}{Theorem}

\newtheorem{definition}{Definition}

\newtheorem{problem}{Problem}

\DeclareMathOperator*{\argmin}{arg\,min}

\title{\LARGE \bf
Multi-query Robotic Manipulator Task Sequencing with\\Gromov-Hausdorff Approximations
}

\author{Fouad~Sukkar$^{1*}$,
Jennifer~Wakulicz$^{1}$,
Ki~Myung~Brian~Lee$^{1}$,
Weiming Zhi$^{2}$,
and Robert Fitch$^{1}$
\thanks{*This research is partially supported by the Industrial Transformation Training Centre (ITTC) for Collaborative Robotics in Advanced Manufacturing (also known as the Australian Cobotics Centre) funded by ARC (Project ID: IC200100001). \textit{(Corresponding author: Fouad Sukkar)}}
\thanks{$^{1}$Authors are with the School for Mechanical and Mechatronic Engineering, University of Technology Sydney, 2007, Ultimo, NSW, Australia and *the Australian Cobotics Centre
        {\tt\small {\{Fouad.Sukkar,Jennifer.Wakulicz, KMBrian.Lee,Robert.Fitch\}@uts.edu.au}}}%
\thanks{$^{2}$W. Zhi is affliated with the Robotics Institute, Carnegie Mellon University, Pittsburgh, PA, USA.
        {\tt\small wzhi@andrew.cmu.edu}}%
}

\begin{document}

\maketitle
\thispagestyle{empty}
\pagestyle{empty}

\begin{abstract}

Robotic manipulator applications often require efficient online motion planning. When completing multiple tasks, sequence order and choice of goal configuration can have a drastic impact on planning performance. This is well known as the robot task sequencing problem (RTSP). Existing general-purpose RTSP algorithms are susceptible to producing poor-quality solutions or failing entirely when available computation time is restricted. We propose a new multi-query task sequencing method designed to operate in semi-structured environments with a combination of static and non-static obstacles. Our method intentionally trades off workspace generality for planning efficiency. Given a user-defined task space with static obstacles, we compute a subspace decomposition. The key idea is to establish approximate isometries known as $\epsilon$-Gromov-Hausdorff approximations that identify points that are close to one another in both task and configuration space. 
Importantly, we prove bounded suboptimality guarantees on the lengths of paths within these subspaces. These bounding relations further imply that paths within the same subspace can be smoothly concatenated, which we show is useful for determining efficient task sequences.
We evaluate our method with several kinematic configurations in a complex simulated environment, achieving up to 3x faster motion planning and 5x lower maximum trajectory jerk compared to baselines.

\begin{IEEEkeywords}
Task sequencing, planning, scheduling and coordination, motion and path planning, industrial robots.
\end{IEEEkeywords}

\end{abstract}

\section{INTRODUCTION}

\begin{figure}[!t]
    \centering
    \subfloat[Task-space subspace decomposition]{\includegraphics[width=0.50\linewidth]{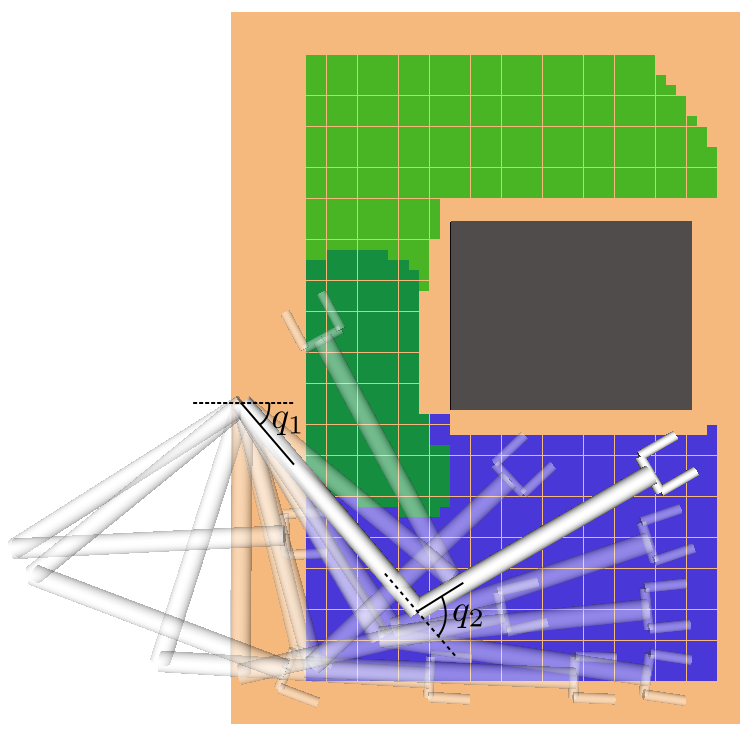}\label{fig:taskspace_subspaces}}
    \subfloat[Mapped subspaces in configuration space]{\includegraphics[width=0.50\linewidth]{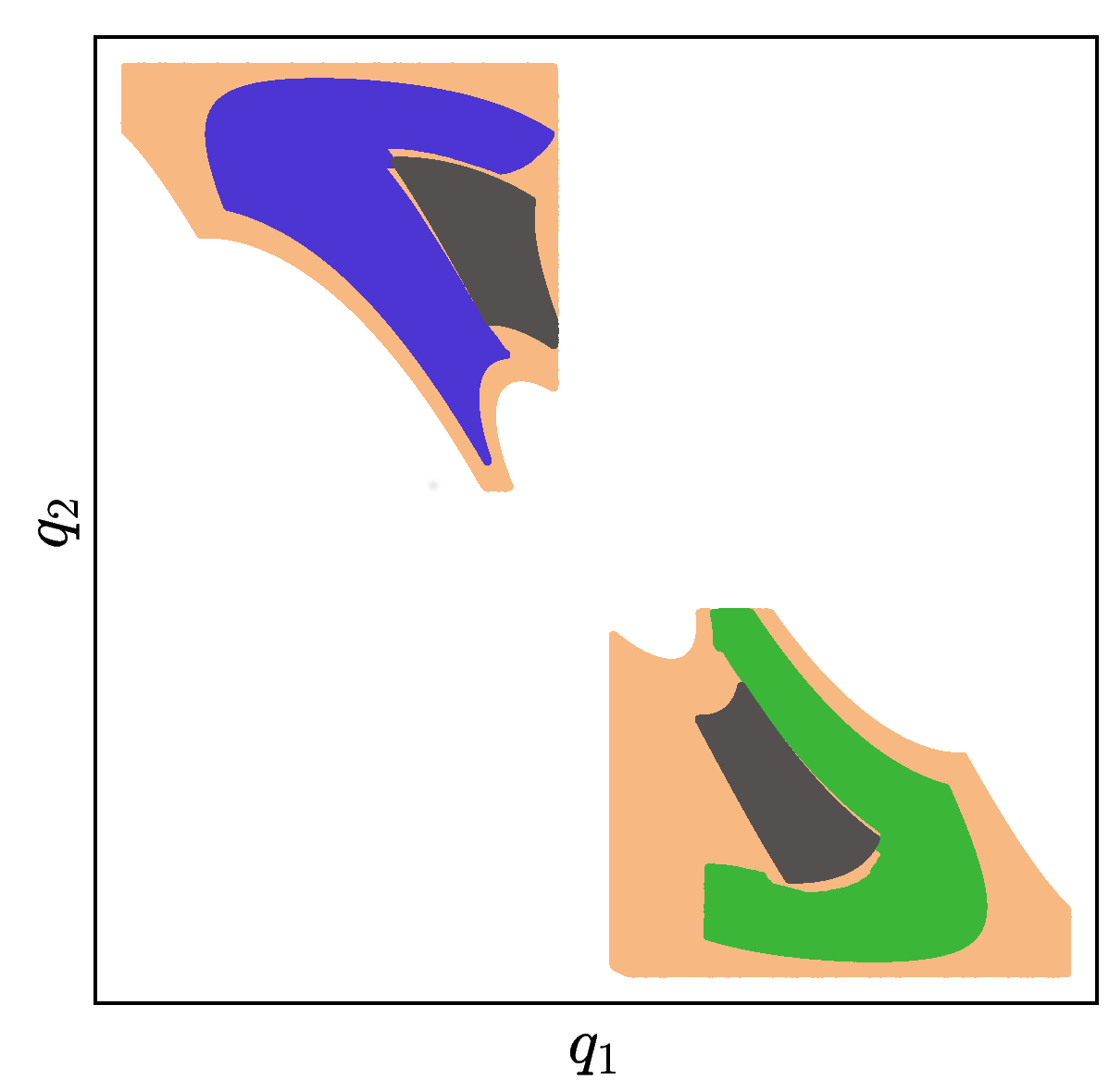}\label{fig:configspace_subspaces}}
    \caption{Example $\epsilon$-Gromov-Hausdorff approximations for a 2-DOF robotic manipulator on a table-top environment with a box obstacle (dark grey). Green and blue regions are subspaces within which end-effector motion does not require large changes in configuration. \protect\subref{fig:taskspace_subspaces}~Task-space subspace decomposition with overlap shown in dark green. \protect\subref{fig:configspace_subspaces}~Mapped disjoint subspaces in configuration space.}
    \label{fig:taskspace_decomp}
\end{figure}

\begin{figure*}[!t]
    \centering
    \subfloat[Naive planner moving from first to second position]{\includegraphics[width=0.33\linewidth]{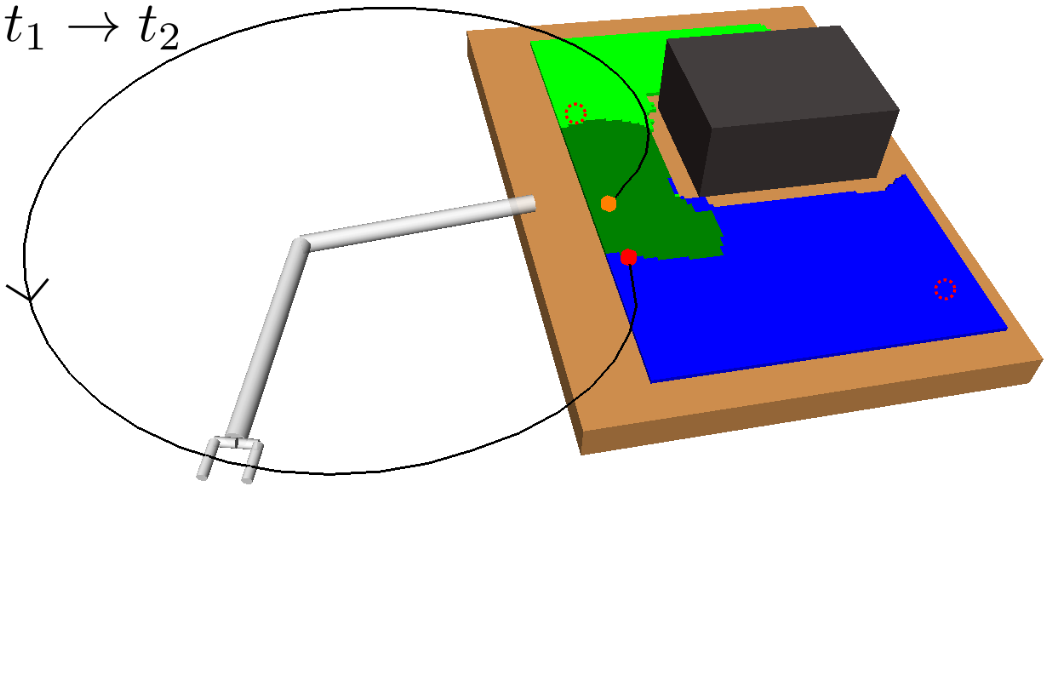}\label{fig:task1_naive}}
    \subfloat[Naive planner moving from second to third position]{\includegraphics[width=0.33\linewidth]{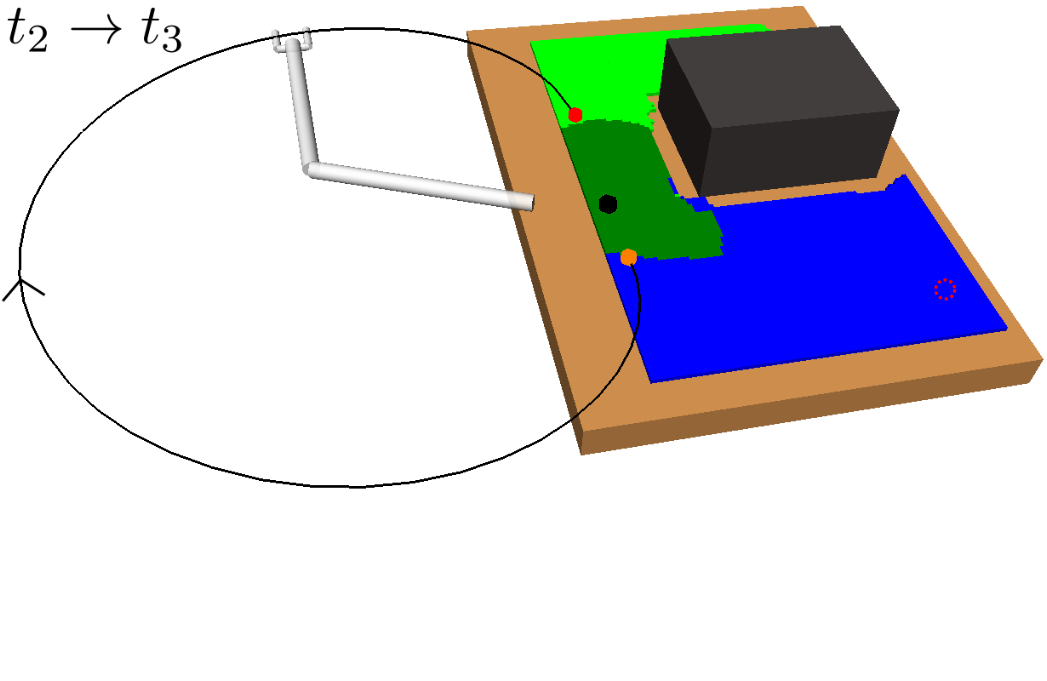}\label{fig:task2_naive}}
    \subfloat[Naive planner moving from third to fourth position]{\includegraphics[width=0.33\linewidth]{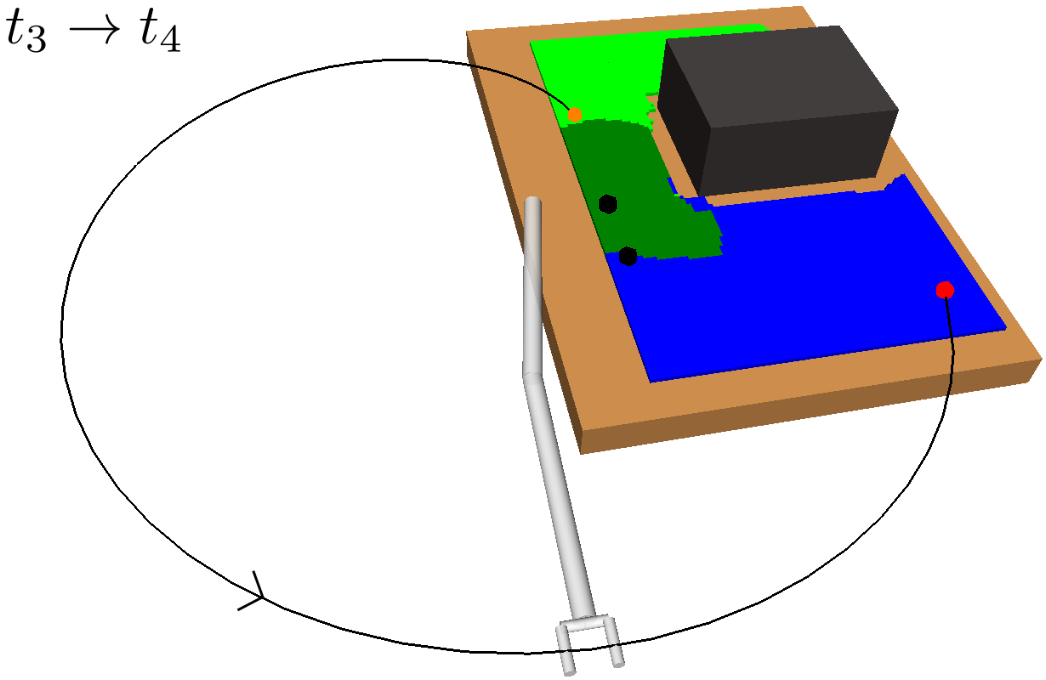}\label{fig:task3_naive}}\\
    \subfloat[HAP moving from first to third position]{\includegraphics[width=0.33\linewidth]{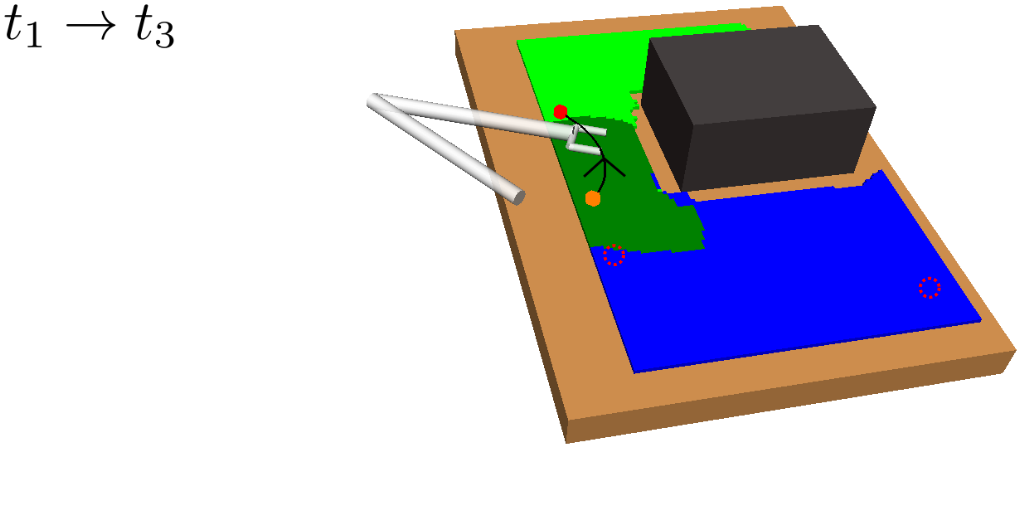}\label{fig:task1_non_naive}}
    \subfloat[HAP moving from third to second position]{\includegraphics[width=0.33\linewidth]{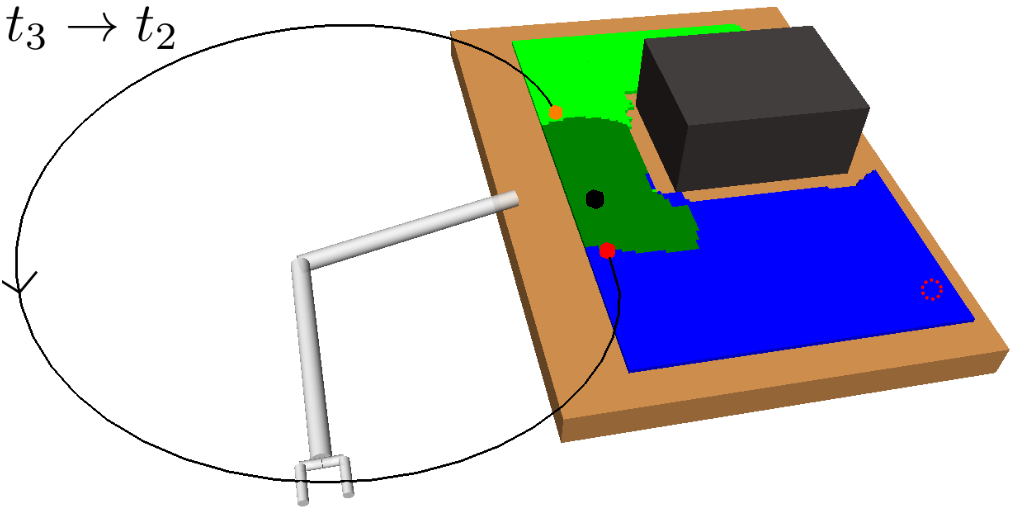}\label{fig:task2_non_naive}}
    \subfloat[HAP moving from second to fourth position]{\includegraphics[width=0.33\linewidth]{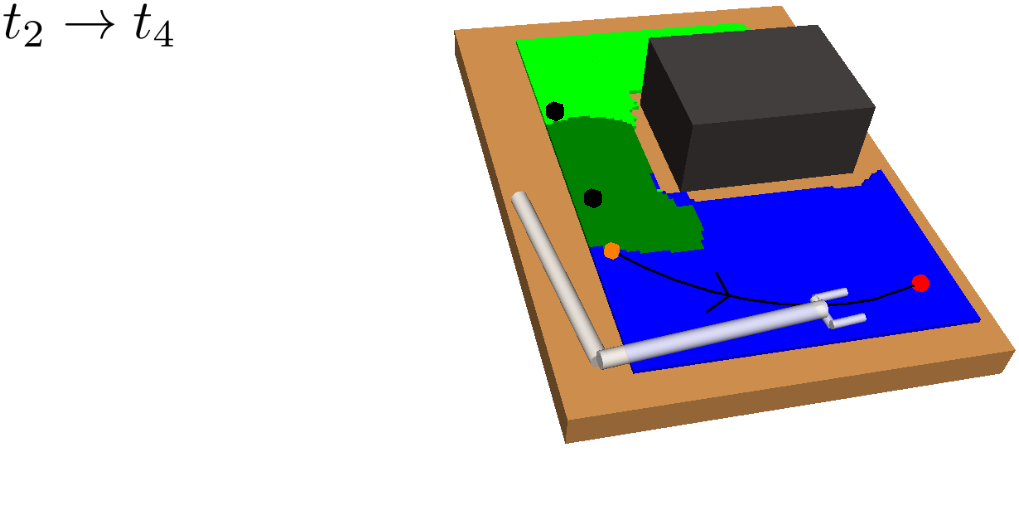}\label{fig:task3_non_naive}}
    \caption{A 2-DOF robotic manipulator tasked with moving its end effector to four unordered positions. Positions are shown as coloured dots. The end-effector path is drawn with direction of movement indicated by the arrowhead. \protect\subref{fig:task1_naive}-\protect\subref{fig:task3_naive} show a sequence of paths produced by a naive planner that only considers task-space distances. \protect\subref{fig:task1_non_naive}-\protect\subref{fig:task3_non_naive} show the sequence produced by our HAP method. HAP's choice of sequencing exploits short within-subspace paths in \protect\subref{fig:task1_non_naive} and  \protect\subref{fig:task3_non_naive}, whereas the naive planner's choice of sequencing underestimates the true motion cost, resulting in longer paths.}
    \label{fig:3tasks}
\end{figure*}

\IEEEPARstart{P}{lanning} algorithms for emerging applications of robotic manipulators must support a greater degree of autonomy than has traditionally been necessary. Robotic manipulators such as \emph{cobots} (collaborative robots), for example, are designed for advanced manufacturing applications where they should operate safely in dynamic work environments shared with humans and should be able to adapt quickly to perform a variety of tasks. These applications share a need for agility and rapid deployment that differs substantially from traditional applications of industrial manipulators, which are typically characterised by repetitive motions in highly structured environments with planning performed completely offline.

Real world applications often require completing a set of tasks; for example, surface inspection, drilling, spray-painting, screw fastening and spot-welding. Determining an efficient task sequence for high-dimensional robot manipulators is challenging due to their kinematic redundancy; that is, a goal pose of the end effector can be achieved via multiple joint configurations. Additionally, the cost induced by the low-level motion between tasks must be reasoned about. Determining an efficient sequence that considers the above is known as the robot task sequencing problem (RTSP)~\cite{alatartsev2015robotic}.

While general-purpose online RTSP algorithms exist, they either decouple the low-level motion~\cite{suarez2018robotsp,li2022efficient} or only partially reason about the low-level motion during sequencing~\cite{wong2020novel}. Furthermore, these algorithms do not offer any performance guarantees. In this paper, we show that this can often lead to highly sub-optimal path sequences and poor motion planning performance. While methods that explicitly consider low-level motion do exist, they are too slow for online settings~\cite{jing2017model}.

We are interested in developing a method that efficiently solves the RTSP for practical situations that require repeated, rapid, and reliable planning, where major structures in the environment are static. A common example is a manipulator that must inspect and grasp objects from shelves and cabinets in a domestic or warehouse environment~\cite{srinivasa2012herb,dogar2013physics,morrison2018cartman}. The shelves are static and their dimensions can be measured beforehand, while the objects on the shelves might not be known and their locations can change. 

Multi-query planners such as the probabilistic roadmap~(PRM)~\cite{kavraki1996probabilistic} aim to gain efficiency through computational reuse; a computationally costly offline process generates a data structure that can then be repeatedly queried efficiently by a low-cost online process. 
Inspired by PRMs, in this work, we seek a data structure that facilitates fast online task sequencing whilst providing some practical path quality guarantees.

Critically, we identify a particularly desirable property of motion plans: \textbf{if two coordinates are close in task space, a smooth, short path exists between them in configuration space.} We propose to decompose the task space into subspaces, such that paths through a particular subspace satisfy this property. We achieve this by establishing a bounding relation between distances in task space and corresponding distances in configuration space. Such a mapping between metric spaces is known as an $\epsilon$-Gromov-Hausdorff approximation~($\epsilon$-GHA).

We present the \emph{Hausdorff approximation planner~(HAP)}\footnote[1]{\tt Open-source implementation of HAP available at: \url{https://github.com/UTS-RI/HAP-py}} which, given a task space and obstacle configuration, computes a set of covering $\epsilon$-GHAs. These mappings provide a compact structure to efficiently plan over and additionally disambiguate kinematic redundancy by providing a unique mapping from task space to configuration space. This is key to our approach for efficiently solving RTSPs where multiple tasks can be clustered based on their associated subspaces, resulting in a set of reduced sequencing problems that can be solved independently.

A motivating example of this subspace decomposition for a 2-DOF manipulator is shown in Fig.~\ref{fig:taskspace_decomp}. Here, the two subspaces overlap in task space. However, when mapped to configuration space via an $\epsilon$-GHA two disjoint subspaces are revealed. Fig.~\ref{fig:3tasks} illustrates a scenario where this mapping is useful for planning efficient path sequences that remain in a single subspace. As can be seen in the figure, a naive greedy approach that only considers task-space distances results in unnecessary wasted motion as opposed to our method which utilises the subspace knowledge.
In higher dimensions, these subspaces may be less obvious to choose from. For example, for 6-DOF or 7-DOF robot arms, several of these subspaces exist. Examples include, wrist/elbow/shoulder-in versus wrist/elbow/shoulder-out configurations and even permutations of these resulting from multiple possible revolutions around each joint.

We evaluate our method in a complex simulated bookshelf environment where a varying number of tasks must be completed in an efficient order. To highlight HAP's applicability to a variety of kinematic configurations we perform experiments with 6-DOF and 7-DOF manipulators and with a manipulator mounted to a mobile base. Compared to state-of-the-art RTSP methods, our method achieves up to 3x faster motion planning and 5x lower maximum trajectory jerk with consistently higher planning success, even when unknown objects are added to the scene.

In summary, the main contributions of this paper are as follows. 1) The HAP framework, a multi-query method for solving RTSPs which, in contrast to previous work, explicitly considers the low-level motion of the robot. 2) A bounded suboptimality analysis which provides theoretical guarantees on the quality of paths through the $\epsilon$-GHAs computed by HAP. 3) An extensive evaluation of HAP against state-of-the-art baselines.

\section{RELATED WORK}
In this section we describe and analyse related work in the literature with respect to our approach. We begin by exploring the origins of the robot task sequencing problem and discuss its development in recent years. We conclude the section with a brief discussion on relevant work in motion planning and useful properties of our framework in such contexts.

\subsection{Robot Task Sequencing}

The RTSP is a well-studied problem, and approaches can generally be divided into offline and online approaches. Offline approaches tend to formulate the problem as a generalised TSP (GTSP)~\cite{noon1993efficient} (also known as the set TSP) where task inverse kinematics~(IK) solutions are formed into subsets and the shortest tour between subsets must be found~\cite{alatartsev2015robotic}. Determining an optimal sequence generally takes several minutes to hours for 6-DOF and higher robot arms, even when ignoring the cost of the low-level motion of the robot and are thus unsuitable for online planning~\cite{zacharia2005optimal,kolakowska2014constraint, villumsen2017framework}.

Online methods sacrifice sequence optimality for faster planning. For example, if the kinematic redundancy of the arm is ignored and only task-space distances are considered, then the sequencing problem can be treated as a TSP~\cite{apple_2017,bac2017performance}. While this results in much faster task sequencing, task-space costs can heavily underestimate the true cost of moving between tasks due to the non-linear nature of high-dimensional robot arms~\cite{You2020}.

Decoupled approaches overcome this limitation by first computing a sequence in task space and then determining an optimal goal configuration assignment.
For example, RoboTSP~\cite{suarez2018robotsp} initially computes a sequence using task-space distance and then assigns optimal joint configurations to each task using a graph search algorithm. The work in~\cite{li2022efficient} adds an extra filtering step to RoboTSP which removes dissimilar candidate IK solutions from the graph search which results in faster task sequencing yet similar execution times. While these decoupled approaches are an improvement over task-space sequencing alone, the choice of configuration assignment is dependent on the initial task sequence and may still produce highly suboptimal trajectories.

More recently, Cluster-RTSP~\cite{wong2020novel} was proposed to overcome the shortcomings of decoupled approaches by directly sequencing in configuration space. To make the problem tractable, unique configurations are first assigned to each task based on a best-fit heuristic. These configurations are then clustered into groups and a sequence is found by solving inter-cluster and intra-cluster TSPs individually. While this method achieved a large reduction in task execution and computation times compared to RoboTSP, the travel costs used for sequencing were still Euclidean distance between assigned task configurations which may not effectively capture costs incurred by the true low-level motion, for example, due to obstacle avoidance. Furthermore, outlier tasks with highly dissimilar IK solutions can heavily impact the overall quality of the task sequence. 

Learning-based approaches have been proposed to solve RTSPs. For example, in~\cite{dong2023optimizing} the problem was framed as a Markov decision process and a reinforcement learning algorithm learned a policy that outputs a time and energy efficient task sequence and corresponding goal IK solution for a robot spot welding task. While results show an overall reduction in computation time and trajectory efficiency compared to offline methods, they assume no obstacles between tasks. Given the black-box nature of learning-based approaches it is unclear how well they generalise to the more complex cluttered environments we consider in this work.

The Mobile Manipulator RTSP extends this problem to mobile manipulators. The work in~\cite{nguyen2023task} approaches this problem by clustering tasks based on the arm's reachability from a finite set of base poses. A minimum set of base poses is then chosen and RoboTSP is used to sequence each cluster. We show that $\epsilon$-GHAs can be naturally extended to mobile bases and enhance subspace allocation, allowing for greater flexibility in subspace assignments in addition to operating in potentially larger workspaces.

While RTSP methods exist that explicitly consider trajectory costs, they either simplify the problem or are intractable for anything other than a small number of tasks. For example, in~\cite{spitz2000multiple} configurations are arbitrarily chosen for each task and motion plans and generated between them. In~\cite{jing2017model} the RTSP was formulated as a combined set covering problem and TSP (SCTSP) and trajectories were exhaustively computed between all possible goal configuration assignments which took 3 hours for 400 tasks. 
In~\cite{zacharia2013task} a genetic algorithm was used to find an optimal sequence that considered obstacle avoidance in its optimisation objective. However, problems with only 15 tasks were considered and computation times were not reported.
The work in~\cite{kovacs2016integrated} formulated a robotic laser welding problem as a TSP with neighborhoods (TSPN) and was given a computation budget of 600 seconds for up to 71 tasks; however, obstacles were ignored.
In contrast, HAP computes a task-space decomposition that considers robot kinematics and prior obstacle knowledge in a reasonable amount of time and facilitates rapid online multi-query task sequencing with high-quality trajectories.

\subsection{Motion Planning}

Our framework has useful properties for motion planning applications. As we will show, trajectories contained within $\epsilon$-GHAs are useful as seed trajectories for trajectory optimisation methods~\cite{zucker2013chomp}. These methods are well-known to be sensitive to the initial seed trajectory and susceptible to failing due to local minima~\cite{schulman2014motion}. Trajectories contained within $\epsilon$-GHAs mitigate this issue since they consider manipulator kinematics and \emph{a priori} knowledge of the environment. 

Work in integrated task and motion planning~\cite{garrett2021integrated} typically utilise some decomposition of the workspace for problem reduction. For example, in~\cite{simeon2004manipulation} authors characterise a lower-dimensional subspace of valid grasp and place configurations. A roadmap is then constructed over these subspaces to search for transit/ transfer modes. However, the construction of these subspaces is mainly concerned with feasibility rather than efficiency.

Additionally, other aspects of our approach share similarities with trajectory library methods~\cite{ellekilde2013motion,lee2014fast,lin2016using}, workspace decomposition~\cite{reid2020sampling,chamzas2021learning} and manipulator coverage planning~\cite{hassan2017coverage,paus2017coverage}. However, our approach differs in that the task-space decomposition and paths through them are constructed specifically in a way to facilitate efficient robot manipulator task sequencing. Furthermore, we provide practical guarantees on the lengths of paths through the decomposition which leads to predictable and well-defined behaviour.

\section{PROBLEM FORMULATION}
Here we provide necessary notation and problem definitions for describing the RTSP. We then provide an overview of our approach which reduces the RTSP complexity via subspace knowledge.

\begin{figure}[!t]
    \centering
    
    \subfloat[Classic TSP]{\includegraphics[width=0.45\linewidth]{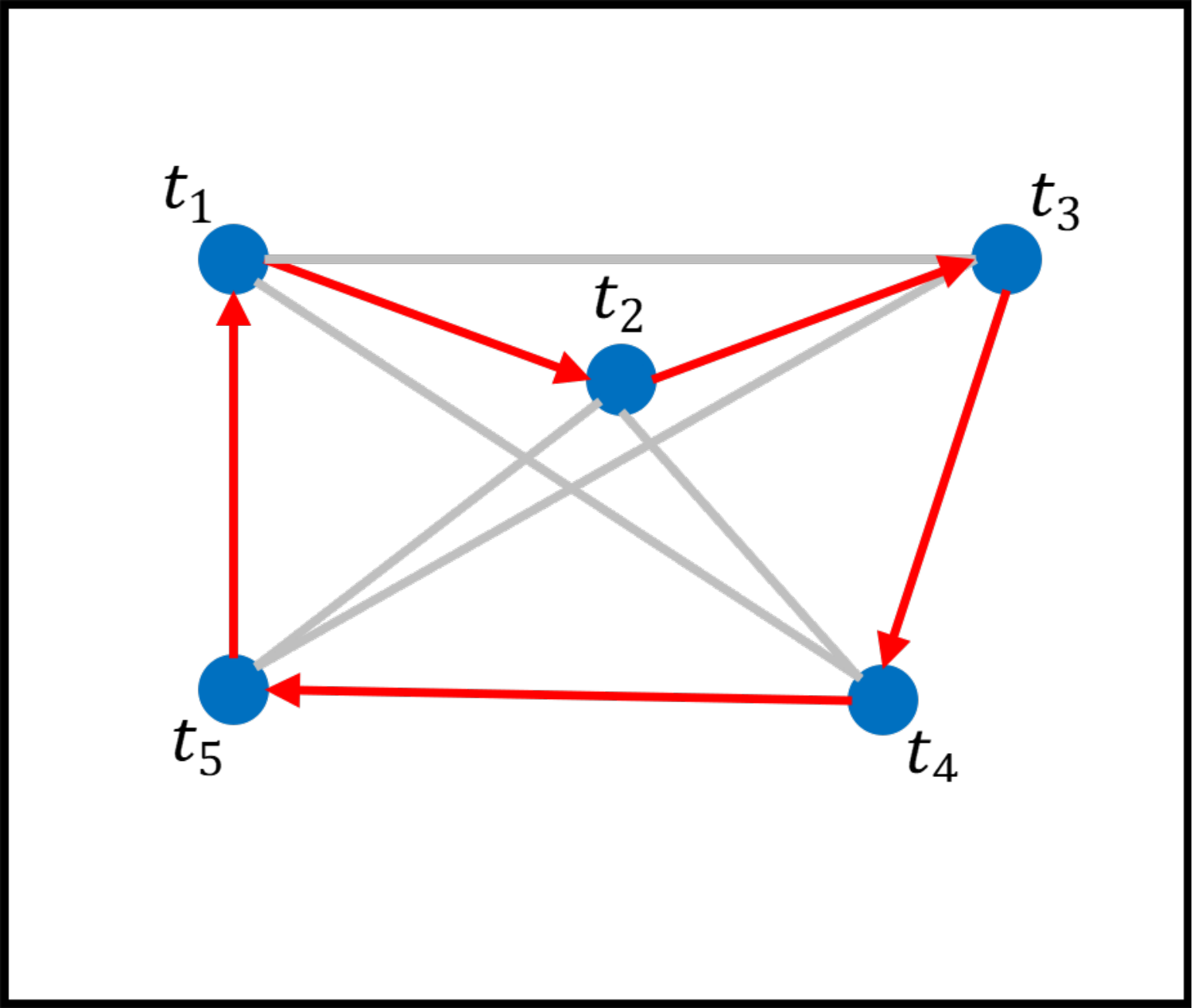}\label{fig:classic_tsp}} \hfill
    \subfloat[Generalised TSP (GTSP)]{\includegraphics[width=0.45\linewidth]{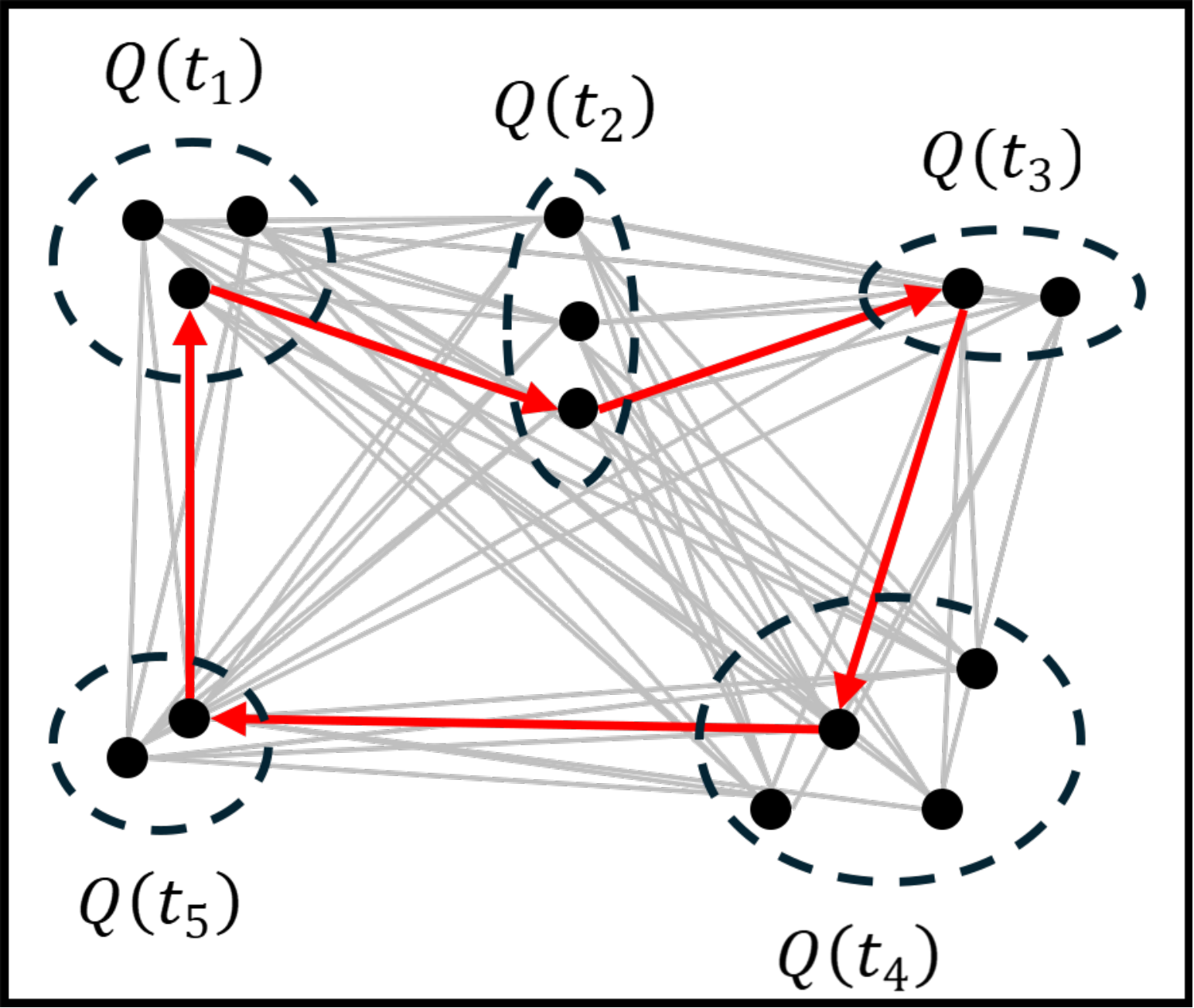}\label{fig:set_tsp}}\\
    \subfloat[Reduced problem with $\epsilon$-GHAs]{\includegraphics[width=0.9\linewidth]{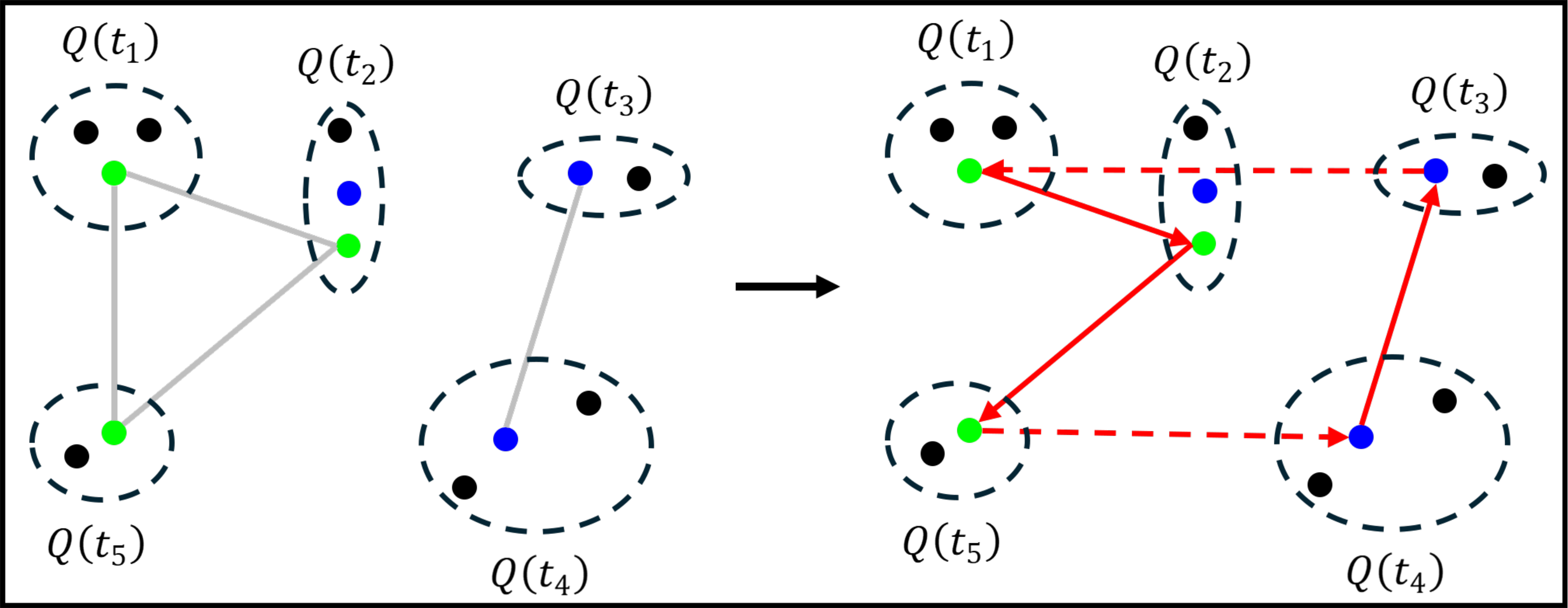}\label{fig:hap_tsp}}
    \caption{Overview of problem and approach. \protect\subref{fig:classic_tsp}~Classic TSP ignores configuration space path costs (nodes are task poses). \protect\subref{fig:set_tsp}~GTSP considers all possible path sequences (node sets represent task pose IK solutions). \protect\subref{fig:hap_tsp} Reduced problem (ours) utilises $\epsilon$-GHAs, only considers subset of IK solutions (green and blue nodes belong to different subspaces) and solves individual TSPs based on subspace assignment.}
    \label{fig:prob_overview}
\end{figure}

\subsection{Notation}
$\mathcal{C}$ represents the configuration space of the robot. In this work we target serial manipulators. The workspace, $W$, is the 3D Euclidean workspace, $W=\mathbb{R}^3$. Given a configuration $q \in \mathcal{C}$, $A(q) \subset W$ denotes the space occupied by the robot model at configuration $q$. $\hat{m}$ $ \subset \mathbb{R}^{3}$ is an approximate model of the environmental obstacles. We assume access to a collision checking process that reports whether the arm is in collision with $\hat{m}$ or with itself. The obstacle region is defined as $\mathcal{C}_{\text{obs}} = \{q \in \mathcal{C} \mid A(q)\cap $$\hat{m}$ $ \neq \emptyset\}$, from which we obtain the free space region $\mathcal{C}_{\text{free}} = \mathcal{C}\setminus \mathcal{C}_{\text{obs}}$. A task, modelled as a 6D pose $t$, typically has a set of inverse kinematics~(IK) solutions $\Xi(t)$. Then, the task space $\mathcal{T} \subset SE(3)$ is the set of poses of the robot's end effector for which \textit{valid} IK solutions, $Q(t) \subset$ $\Xi(t)$ exist. Any IK solution $q$ is considered valid if $q \in \mathcal{C}_{\text{free}}$.
$\hat{T}$ is a discrete approximation of the subset of $\mathcal{T}$ where the robot is expected to operate frequently. Summary of notation can be found in Table~\ref{table:notation}.

\subsection{Robot Task Sequencing}
To complete a given task $t \in \mathcal{T}$ chosen from the task space $SE(3)$ the manipulator must position its end effector at pose $t$ while avoiding collision. 
We are interested in finding a minimum cost path in $\mathcal{C}_{\text{free}}$ that completes task $t$. The manipulator's path is modelled as a discrete sequence of configurations $\pi = \{\pi[1],\ldots,\pi[N] \mid \pi[i] \in \mathcal{C}_{\text{free}}\}$. We measure the length of a path using a metric on the configuration space,
\begin{equation}
d_{C}(\pi) = \sum_{n=1}^{N-1} d_{C}( \pi[n], \pi[n+1] ).
\end{equation}

It is convenient to consider the starting pose of the end effector to be the goal pose of the previous task. A manipulator may be able to achieve the goal pose with multiple, possibly infinite, configurations. We are therefore interested in the set of paths (sequence of configurations) of varying length $N$ between task $t_j$ and $t_l$, $\Pi(t_{j},t_{l}) = \{\pi \mid \pi[1] \in Q(t_{j}), \pi[N] \in Q(t_{l})\}$,  leading to the following problem definition.

\begin{problem} [Manipulator goal configuration assignment problem] \label{prob:motion_plan}
Find a goal configuration that achieves the shortest collision-free path $\pi^{*}$ in configuration space between two tasks $t_j$ and $t_l$ in $\mathcal{T}$,

\begin{equation}\label{eq:motion_planning}
    \pi^{*}(t_{j}, t_{l}) = \argmin_{\pi \in \Pi(t_{j}, t_{l})} d_{C}(\pi).
\end{equation}
\end{problem}

The operational scenarios that motivate this work often involve more than one task. The manipulator is given an unordered set of $M$ tasks $T = \{t_{i}\}_{i=1}^M \subset \mathcal{T}$. This set of tasks can be viewed as a batch-query scenario where all tasks must be completed while minimising total cost. Thus, it is necessary to choose a sequence of tasks, imposing a total ordering over $T$, in addition to repeatedly solving Problem~\ref{prob:motion_plan}. The robot task sequencing problem can thus be formulated as follows.

\begin{problem} [Robot task sequencing problem] \label{prob:task_seq}
Find a permutation of the tasks $\sigma \in S_{M}$, where $S_M$ is the set of possible task sequences, that minimizes the total path cost:
\begin{equation}\label{eq:task_sequencing}
    \min_{\sigma \in S_{M}} \sum_{n=1}^{M-1} d_{C}(\pi^{*}( t_{\sigma[n]}, t_{\sigma[n+1]} )), 
\end{equation}
such that the configuration at the end of $\pi^*(t_{\sigma[n]}, t_{\sigma[n+1]})$ and the start of $\pi^*(t_{\sigma[n+1]}, t_{\sigma[n+2]})$ are equal for all $n \in [1, M-2]$.
\end{problem}

Problem~\ref{prob:task_seq} does not admit a tractable solution. For each task pose $t_{j} \in T$ to be visited, there can be multiple valid configurations in set $Q(t_{j})$\footnote{For infinite sets, as is the case for redundant arms, we produce a finite set by discretising around one of the free joints}. A direct solution requires simultaneously choosing the optimal configuration from $Q(t_j)$ and sequencing the tasks. In other words, this case contains an instance of the generalised TSP~(GTSP)~\cite{noon1993efficient} (Fig.~\ref{fig:set_tsp}), which is even more complex than the standard TSP~\cite{matai2010traveling} (Fig.~\ref{fig:classic_tsp}), and does not allow for real-time planning. The number of possible sequences to evaluate is $\mathcal{O}(|Q(t_j)|^{M} \times M!)$, where $|Q(t_j)|$ is the cardinality of the largest set of IK solutions for any task $t_j \in T$. Furthermore, evaluating the path cost for each configuration pair requires solving a motion planning problem, which is itself PSPACE-complete~\cite{canny1988complexity}. 
These computational complexities, which we wish to avoid, motivate our approach to solving Problem~\ref{prob:task_seq}.

\subsection{Approach Overview}

Reducing the set of IK solutions of each task $t_j$ to a unique solution $q_{t_j}\in Q(t_j)$ transforms the GTSP in Problem~\ref{prob:task_seq} to a classical TSP, which is easier to solve. We therefore search for a mapping $\theta$ from each task $t_{j}$ in a discrete approximation $\hat{T}$ of the full task space $\mathcal{T}$ to a suitable unique IK solution $\theta(t_{j}) \in Q(t_j)$.

To ensure that task sequencing using this mapping guarantees short, smooth and collision-free paths between tasks in both task and configuration space, the map $\theta : \hat{T} \to \mathcal{C}$ is chosen to be an approximate isometry. Intuitively, an approximate isometry enforces that two tasks close in task space remain close in configuration space after mapping. Once $\theta$ is found, the GTSP in Problem~\ref{prob:task_seq} is successfully reduced to the classical TSP problem as desired, and task sequencing can be solved efficiently. As a result, the solution of the reduced problem is a near-optimal solution to Problem~\ref{prob:motion_plan}, producing smooth, short paths between tasks.

However, the kinematics of a robotic manipulator working in the task space may not necessarily admit a single approximate isometry $\theta$. For example, in Fig.~\ref{fig:task1_naive} there is no single $\theta$ that allows the arm to travel the short task-space distance required without taking a long path through configuration space. To handle such cases, we relax the constraint of enforcing a unique $\theta$ and consider a finite set of maps. Thus, we propose finding $K$ mappings $\{\theta\}_{t=1}^K$ from $K$ subspaces in task space to distinct subspaces in configuration space, i.e., $\theta^i:\hat{T}^{i}\rightarrow\mathcal{C}^i$. Within each subspace, the corresponding approximate isometry $\theta^i$ guarantees short, smooth and collision-free paths in both task and configuration space. 

This effectively decomposes the task space into subspaces that may overlap, leading to cases where a single task is assigned multiple IK solutions rather than our single desired one. As such, the GTSP problem is no longer reduced immediately to a classical TSP one.
To solve Problem~\ref{prob:task_seq} we must first disambiguate which IK solution to select in overlapping subspace regions. Then, we may solve a classical TSP problem within each subspace independently. The final reduced problem and approach is illustrated in Fig.~\ref{fig:hap_tsp}.

\section{HAUSDORFF APPROXIMATION PLANNER FRAMEWORK}

\begin{figure}[!t]
    \centering
    \includegraphics[trim={3.5cm 0.9cm 2.0cm 0.9cm},clip,width=1.0\linewidth]{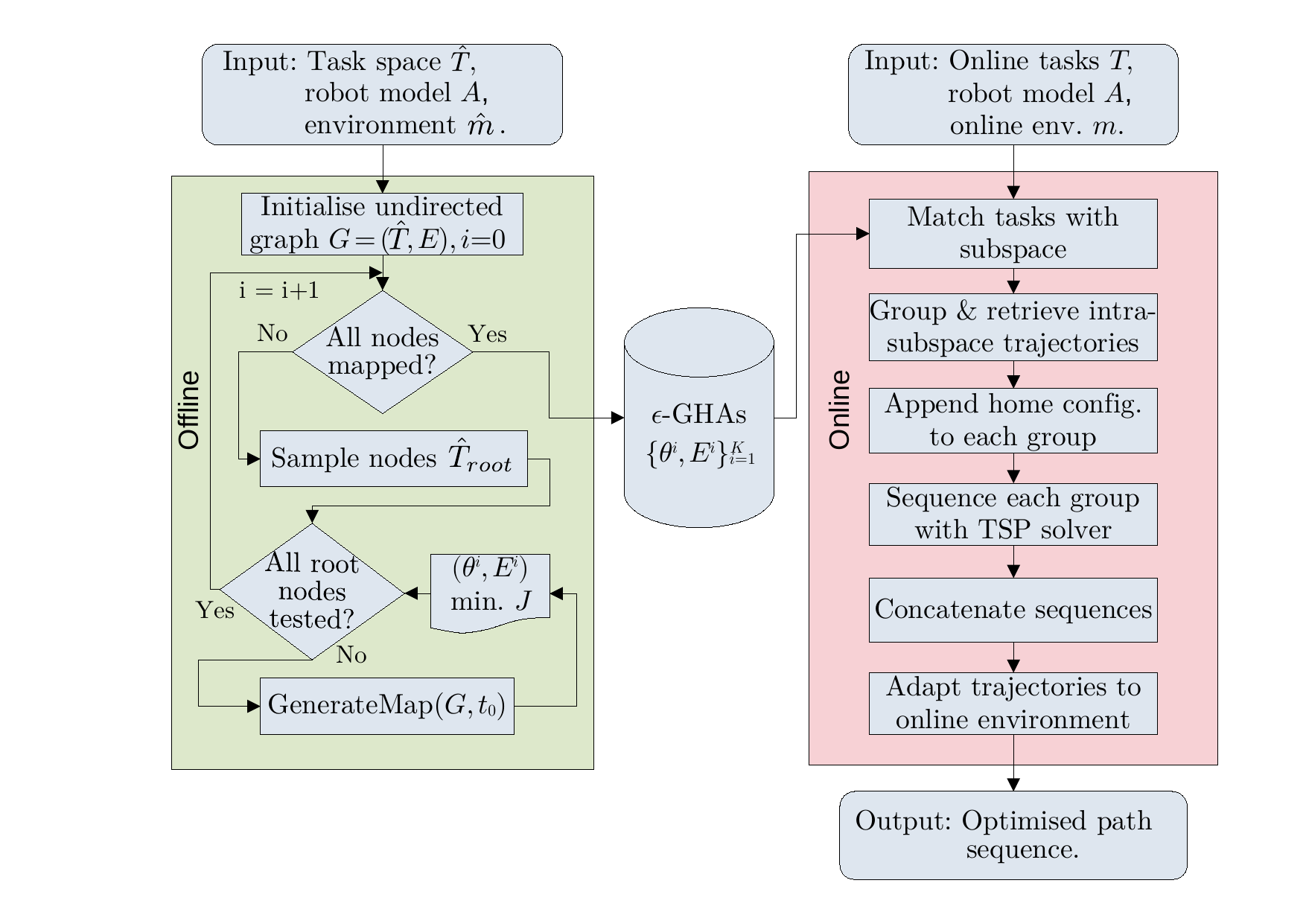}\label{fig:flow_chart}
    \caption{Overall process of the HAP framework. On the left is the task decomposition which generates the $\epsilon$-GHAs offline. On the right describes the task-sequencing process given an online scenario which utilises the $\epsilon$-GHAs.}
    \label{fig:flowchart}
\end{figure}

\begin{figure*}[!ht]
    \centering
    \subfloat[]{\includegraphics[width=0.49\columnwidth]{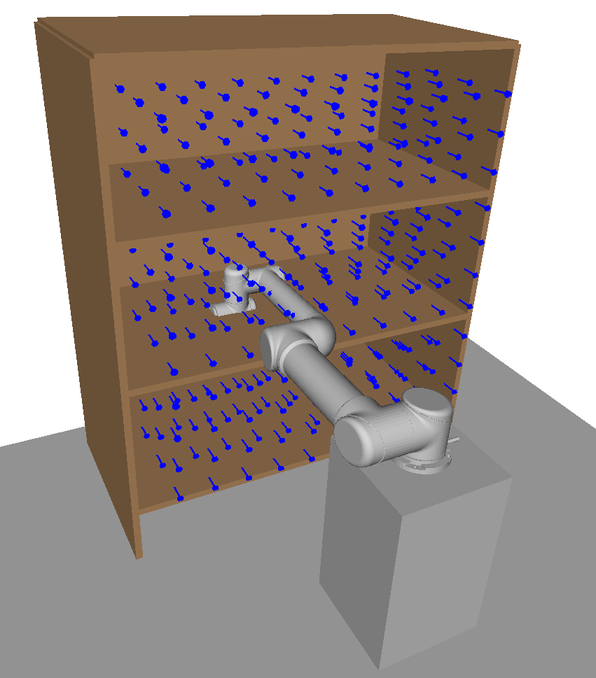}%
    \label{fig:offline_grid}}
    \hfil
    \subfloat[]
    {\includegraphics[width=0.499\columnwidth]{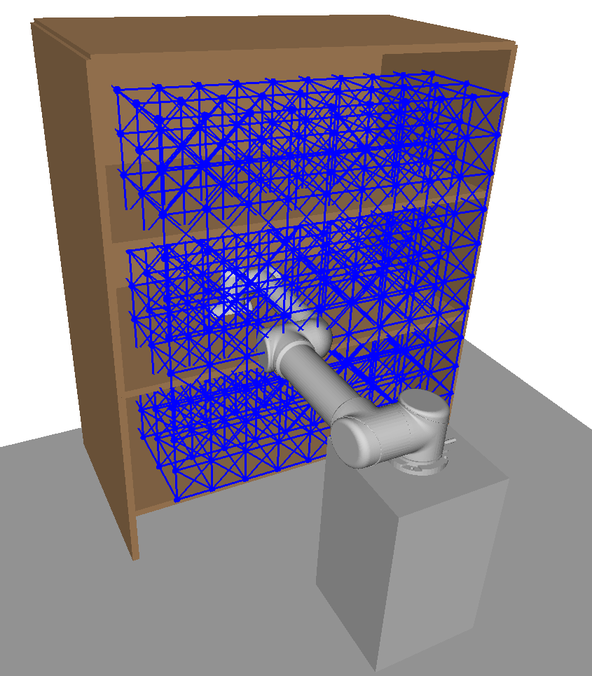}\label{fig:offline_edges}}%
    \hfil
    \subfloat[]
    {\includegraphics[width=0.499\columnwidth]{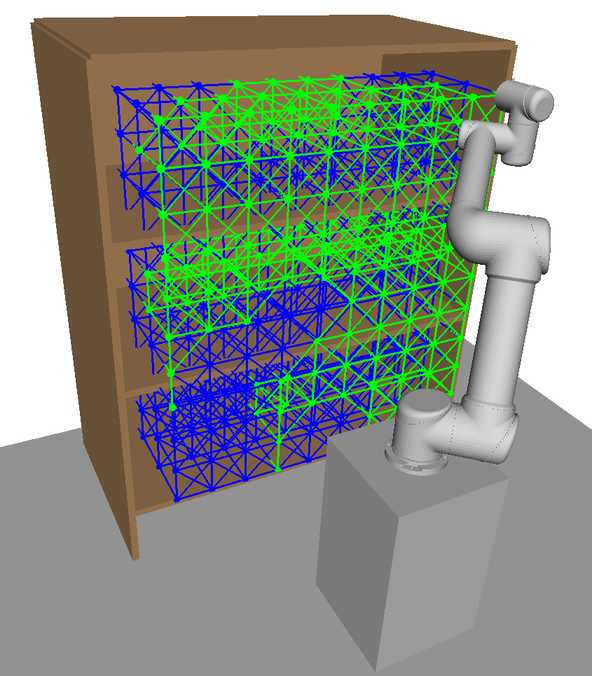}\label{fig:offline_edges_subspace}}%
    \hfil
    \subfloat[]{\includegraphics[width=0.49\columnwidth]{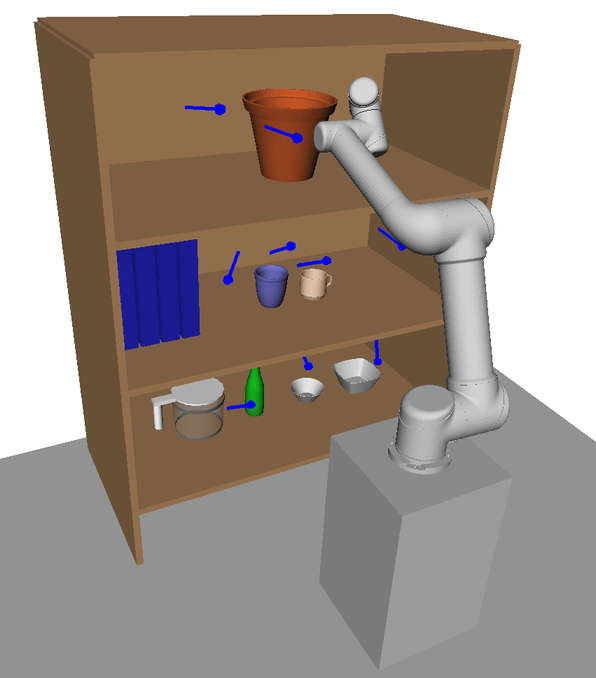}\label{fig:online_scenario}}%
    \caption{Example HAP task space decomposition and online scenario.
    \protect\subref{fig:offline_grid} anticipated discretised task space $\hat{T}$ (blue poses) and environment $\hat{m}$. \protect\subref{fig:offline_edges} an undirected graph $G$ constructed over $\hat{T}$. \protect\subref{fig:offline_edges_subspace} a single $\epsilon$-GHA mapped in task-space represented as a subgraph of $G$, subspaces are searched for by traversing the graph and assigning an unique IK solution to each node such that all connected neighbours are close in configuration space. \protect\subref{fig:online_scenario} an example online scenario with the arm in its home configuration. Online, HAP is robust against objects in \protect\subref{fig:online_scenario} that are unmodelled in $\hat{m}$ and tasks can differ to those in $\hat{T}$.}
    \label{fig:hap_overview}
\end{figure*}

In this section we detail the HAP framework which consists of an offline pre-computation stage, and an online planning stage, see Fig.~\ref{fig:flowchart}. In the pre-computation stage the map(s) $\{\theta^i\}_{i=1}^K$ and corresponding task space decompositions are generated as detailed in Sections~\ref{sec:task-space_decomp}~and~\ref{sec:egha_comp}. Then, during online planning the online tasks are matched to the mapped subspaces, intra-subspace TSPs are solved independently, and the final motion is generated for execution. These online stages are described in Section~\ref{sec:task_seq}.

\subsection{Task-space Decomposition}\label{sec:task-space_decomp}

\begin{figure*}[!ht]
    \subfloat[]{\includegraphics[trim={2cm 0cm 2cm 0cm},clip,width=0.259\columnwidth]{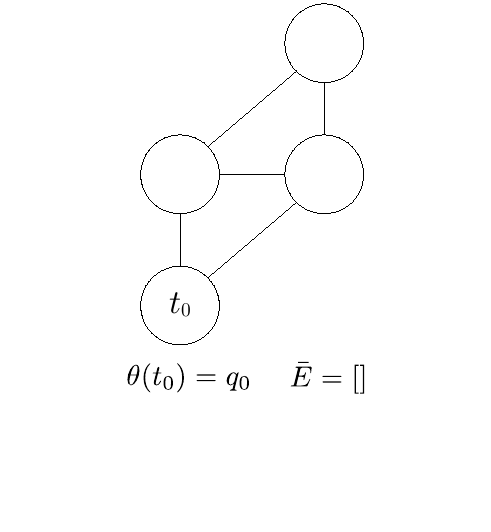}%
    \label{fig:egha_vis1}}
    \hfil
    \subfloat[]
    {\includegraphics[width=0.499\columnwidth]{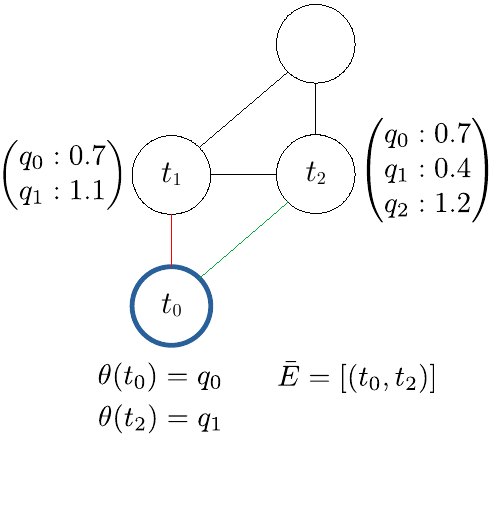}\label{fig:egha_vis2}}%
    \hfil
    \subfloat[]
    {\includegraphics[width=0.499\columnwidth]{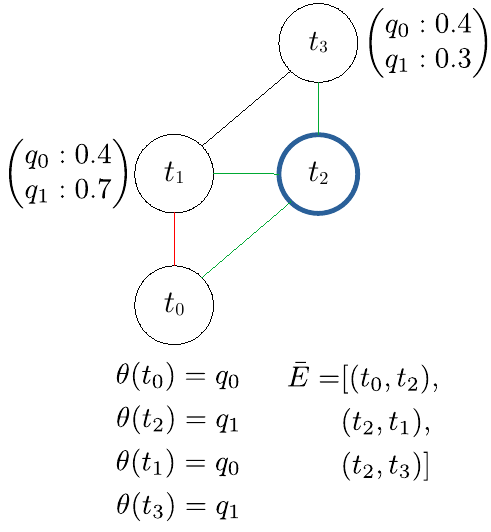}\label{fig:egha_vis3}}%
    \hfil
    \subfloat[]{\includegraphics[width=0.49\columnwidth]{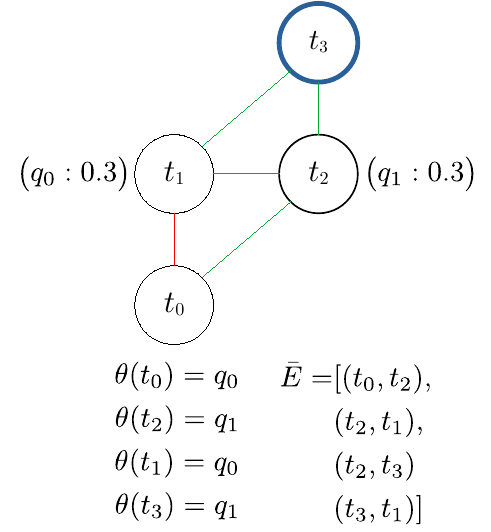}\label{fig:egha_vis4}}%
    \caption{$\epsilon$-GHA propagation process (lines~\ref{alg:egha_begin}-\ref{alg:egha_end} in Alg.~\ref{alg:mod_dijkstra})). Expanded nodes are highlighted with blue, column vectors beside nodes show their IK solutions and corresponding configuration distances to the expanded node, green undirected edges indicate a mapping was found that satisfied the $\epsilon$-GHA condition ($\epsilon = 0.5$ used in this example) and red undirected edges vice versa. \protect\subref{fig:egha_vis1} Starts with root node $t_0$ and arbitrarily assigns mapping $\theta(t_0)$ to its first IK solution $q_0$. \protect\subref{fig:egha_vis2} Only one IK solution $q_1$ for $t_2$ was below $\epsilon$ and thus added to $\theta$ and $\Bar{E}$. \protect\subref{fig:egha_vis3} Shows a case where it's possible for $t_1$ to belong to the same subspace as $t_0$ via an indirect path through $t_2$ and $t_3$. \protect\subref{fig:egha_vis4} When $t_3$ is expanded both neighbours have existing mappings and thus are the only configurations checked, this is important in ensuring existing edges maintain $\epsilon$-GHA condition satisfaction.}
    \label{fig:egha_vis}
\end{figure*}

\begin{algorithm}[t]
	\caption{Decompose task space into $\epsilon$-GHA maps}\label{alg:gen_database}
 \textbf{Input:} robot model $A$, environment \bm{$\hat{m}$} and poses \bm{$\hat{T}$}\\
    \textbf{Output:} maps \bm{$\{\theta^i\}_{i=1}^K$} and corresponding edges \bm{$\{E^i\}_{i=1}^K$}
	\begin{algorithmic}[1]
	\Function{DecomposeTaskSpace}{$\hat{m}$, $\hat{T}$}
	    \State $E \gets \texttt{gen\_edges}(\hat{T})$
	    \State $\hat{T}_{\text{open}} \gets \hat{T}$
	    \State $i \gets 0$
	    \While{$\hat{T}_{\text{open}} \neq \emptyset$}
    	    \State $\hat T_{\text{root}} \gets \texttt{sample}(\hat{T}_{\text{open}})$
    	    \State $J^* \gets \infty$
    	    \For{\textbf{each} $t_0$ in $\hat T_{\text{root}}$}\label{alg:candidate_egha_begin}
                
    	        \State $J(\theta',t_0), \theta', E' \gets$ \Call{GenerateMap}{$G$, $t_0$}
    	        \If{$J(\theta',t_0) < J^*$}
    	            \State $J^*\gets J(\theta',t_0)$
    	            \State $\theta^i, E^i \gets \theta', E'$
    	        \EndIf
    	    \EndFor \label{alg:candidate_egha_end}
    	    \For{\textbf{each} $t$ in $G$ $|$ $\theta^i(t)$ not UNDEFINED}
    	        \State $\hat{T}_{\text{open}} \gets \hat{T}_{\text{open}}  \setminus \{t\}$
    	    \EndFor
    	    \State $i \gets i + 1$
    	\EndWhile
    	\State \Return $\{\theta^i, E^i\}_{i=1}^{K}$
    \EndFunction
	\end{algorithmic}
\end{algorithm}

\begin{algorithm}
	\caption{Search for candidate $\epsilon$-GHA map} \label{alg:mod_dijkstra}
	\hspace*{\algorithmicindent} \textbf{Input:} graph \bm{$G$}, root node \bm{$t_0$}\\
    \hspace*{\algorithmicindent} \textbf{Output:} minimum sum of path costs \bm{$J(\theta', t_0)$} and\\ \hspace*{1cm} corresponding mappings \bm{$\theta'$}
    \vspace{0.1cm}
	\begin{algorithmic}[1]
	\Function{GenerateMap}{$G$, $t_0$}
	    \State $J(\theta', t_0) \gets \infty$
        \For {\textbf{each} $q$ in $Q(t_0)$}
            \State $\Bar{E} \gets \emptyset$ \label{alg:egha_begin}
            \State $\mathbb{Q} \gets t_0$
            \For {\textbf{each} $t$ in $G$}
    		    \State $\theta(t) \gets q$ if $t = t_{0}$, else UNDEFINED
    		    \State $g(\pi(t_0,t_0);\theta)$ $\gets 0$ if $t = t_{0}$, else $c_{\text{max}}$
    	    \EndFor
		    \While {$\mathbb{Q} \neq \emptyset$}
		        \State $t \gets \text{argmin}_{t' \in \mathbb{Q}} g(\pi(t_0,t');\theta)$
		        \State $q_t \gets \theta(t)$
		        \For {\textbf{each } $u$ in $\mathcal{N}_t$} 
		            \If{$\theta(u)$ UNDEFINED} 
    		            \State $q_{u}, l(u,t) \gets \Call{GetMapping}{u, q_t}$
    		        \Else
                        \State $q_{u} \gets \theta(u)$
                        \State $l(u,t)$ $\gets d_C(q_t, \theta(u))$
    		        \EndIf
    		        \State $g(\pi(t_0,u);\theta), \theta \gets \Call{Update}{\mathbb{Q}, q_{u}, u, t}$
		        \EndFor
		    \EndWhile  \label{alg:egha_end}
	        \vspace{0.05cm}
		    \State $J(\theta,t_0) = \sum_{t \in G\setminus t_0}$ $g(\pi(t_0,t);\theta)$
	        \vspace{0.05cm}
		    \If {$J(\theta,t_0) < J(\theta', t_0)$ }
		        \State $J(\theta', t_0) \gets J(\theta,t_0)$
		        \State $\theta', E' \gets \theta, \Bar{E}$ 
		    \EndIf
		\EndFor
	\State \Return $J(\theta', t_0)$, $\theta', E'$
	\EndFunction
	\end{algorithmic}
\end{algorithm}

\begin{algorithm}
	\caption{Get candidate mapping for neighbour node $u$}\label{alg:get_candidate}
	\hspace*{\algorithmicindent} \textbf{Input:} Neighbouring node \bm{$u$} and expanded node mapping \bm{$q_t$}\\
    \hspace*{\algorithmicindent} \textbf{Output:} candidate mapping \bm{$q_{u}$} and resulting edge cost \bm{$l(u,t)$}
	\begin{algorithmic}[1]
        \Function{GetMapping}{$u$, $q_t$}
        \State $l(u,t)$ $\gets  \infty$
        \For {\textbf{each} $p$ in $Q(u)$ $|$ $d_{C}(q_t, p) < \epsilon + d_{T}(u,t)$} {\label{alg:distance_constraint}}
            \If {$d_C(q_t, p) < $ $l(u,t)$}
                \State $q_{u} \gets p$
                \State $l(u,t)$ $\gets d_C(q_t, q_{u})$
            \EndIf
        \EndFor
        \State \Return $q_{u}$, $l(u,t)$
        \EndFunction
	\end{algorithmic}
\end{algorithm}

\begin{algorithm}
	\caption{Update neighbour path cost and node mapping}\label{alg:update_mapping}
	\hspace*{\algorithmicindent} \textbf{Input:} Queue $\mathbb{Q}$, candidate mapping \bm{$q_{u}$}, neighbour node \bm{$u$} and expanded node {$t$} \\
    \hspace*{\algorithmicindent} \textbf{Output:} updated path cost \bm{$g(\pi(t_0, u);\theta)$} and updated IK assignment \bm{$\theta(u)$}
	\begin{algorithmic}[1]
	    \Function{Update}{$\mathbb{Q}$, $q_{u}$, $l(u,t)$}
        \If {$l(u,t)$ $+$ $g(\pi(t_0,t);\theta)$ $<$ $g(\pi(t_0,u);\theta)$}
            \State $\theta(u) \gets q_{u}$
            \State $\Bar{E}.add(u,t)$
    		\State $g(\pi(t_0,u);\theta) \gets l(u,t) + g(\pi(t_0,t);\theta)$
            \State $\mathbb{Q} \gets \mathbb{Q} \cup \{ u \}$
        \EndIf
        \State \Return $g(\pi(t_0, u);\theta)$, $\theta(u)$
    \EndFunction
	\end{algorithmic}
\end{algorithm}

The offline pre-computation stage begins with computation of the $\epsilon$-GHA(s). 
We are given an anticipated environment $\hat{m}$ and a set of tasks $\hat{T}$ representative of online scenarios. We assume the environment is not entirely known in advance, but a general model $\hat{m}$ is available that approximates what is expected. For example, $\hat{m}$ might represent a general bookshelf structure including the shelves and case. However, $\hat{m}$ need not include all objects within the bookshelf, as these details may be unknown \emph{a priori} and discovered later.

An undirected graph $G$ is created with nodes corresponding to poses $t \in \hat{T}$ and edges $\hat{E}$ formed by connecting nodes within a specified radius of each other. An example $\hat{T}$ and $G$ is shown in Figs.~\ref{fig:offline_grid}-\subref*{fig:offline_edges}.
An $\epsilon$-GHA is then generated from $G$ according to Alg.~\ref{alg:gen_database} and is represented as a subgraph of $G$; that is, $(\theta, E)$ where $E \subseteq \hat{E}$. An example subgraph representation is shown in Fig.~\ref{fig:offline_edges_subspace}.

Algorithm~\ref{alg:gen_database} is initialised by assigning all nodes to $\hat{T}_{\text{open}}$.
A node stays in $\hat{T}_{\text{open}}$ until a unique IK solution is assigned. 
While $\hat{T}_{\text{open}}$ is not empty, $\theta$ is found via a \textit{generate map} algorithm procedure~(Alg.~\ref{alg:mod_dijkstra}). Algorithm~\ref{alg:mod_dijkstra} searches for a $\theta$ that minimizes the objective cost, the sum of all \textit{minimum cost} paths $\pi^*(t_0, -)$ from a root node $t_0 \in G$ to all other nodes. That is, 
\begin{equation}\label{eqn:mod_dijk_obj}
J(\theta, t_0) = \sum_{t \in G \setminus t_0} g(\pi^*(t_0, t); \theta),
\end{equation}
where the cost of any path $\pi(t_{0},t_{N-1}) = \{\theta(t_{0}), \ldots,\theta(t_{N-1})\}$ of $N$ nodes is defined as
\begin{equation}
    g(\pi(t_{0},t_{N-1});\theta) = \sum_{n=0}^{N-2} d_C(\theta(t_n),\theta(t_{n+1})).
\end{equation}

To ensure that paths have bounded lengths and do not involve large, unnecessary arm movements, $\theta$ is additionally constrained such that it is an approximate isometry, or an $\epsilon$-Gromov-Hausdorff approximation ($\epsilon$-GHA)~\cite{jaramillo2014structure}, defined below.
\begin{definition}\label{definition_eCHA} The map $\theta : (\hat{T},d_T) \to (\mathcal{C}, d_C)$ is an $\epsilon$-Gromov-Hausdorff approximation if $\forall t_j, t_l \in \hat{T}$
\begin{equation}\label{eGHA}
    |d_T(t_j, t_l) - d_C(\theta(t_j), \theta(t_l))| < \epsilon 
\end{equation} for some $\epsilon > 0$ and metric on the task-space $d_T$.
\end{definition}

It should be noted that $d_T$ and $d_C$ operate on different metric spaces. However, this can be accounted for through appropriate choice of epsilon.

Depending on the value of $\epsilon$, topology of $\hat{m}$ and robot kinematic structure, some nodes may still have undefined mappings after a single iteration of the algorithm. It may then be necessary to search for multiple $\epsilon$-GHAs, $\{ \theta^i \}_{i=1}^K$, and corresponding edges, $\{ E^i \}_{i=1}^K$, that map a covering set of subspaces $\{\hat{T}^i\}_{i=1}^K \subset \hat{T}$ to a set of disjoint configuration subspaces $\{\mathcal{C}^i\}_{i=1}^K \subset \mathcal{C}$.

Note that poses in the example in Fig.~\ref{fig:offline_grid} have the same orientation. For task spaces with differing orientations the algorithm runs identically. As long as a valid task-space distance metric is used,~\eqref{eGHA} holds.

\subsection{$\epsilon$-GHA Computation}\label{sec:egha_comp}

The \textit{generate map} algorithm as outlined in Alg.~\ref{alg:mod_dijkstra} is based on Dijkstra's algorithm and attempts to find a unique IK solution for each task in $\hat{T}$ such that the objective cost in \eqref{eqn:mod_dijk_obj} is minimised. The process is visualised in Fig.~\ref{fig:egha_vis}.
It begins by assigning an undefined mapping to all nodes except for the root node which is mapped arbitrarily to one of its IK solutions. The rest of the procedure is carried out as in the original Dijkstra's algorithm with a priority queue $\mathbb{Q}$~\cite{barbehenn1998note}, with modifications to the node expansion step where we compute the unique IK solution mapping. Here, the set of neighbouring nodes of $t$, $\mathcal{N}_t$, is run through the functions shown in Algs.~\ref{alg:get_candidate}~and~\ref{alg:update_mapping}.

In \textit{get mapping} (Alg.~\ref{alg:get_candidate}), if a node $u \in \mathcal{N}_t$ has already been assigned an IK solution $q_u$, this solution and the edge cost,
\begin{equation}\label{eq:edge_cost}
    l(u,t) = d_C(\theta(u),\theta(t)),
\end{equation}
are returned and the $\epsilon$-GHA $\theta$ is not updated. Otherwise, the IK solution that gives the minimum $l(u,t)$ while satisfying the $\epsilon$-GHA constraint in~\eqref{eGHA} is returned. The returned $q_u$ and $l(u,t)$ are passed as a candidate to \textit{update} (Alg.~\ref{alg:update_mapping}). In \textit{update}, if the candidate path cost from root node to $u \in \mathcal{N}_t$ is less than the current cost then $u$ is mapped to $q_u$, the path cost is updated, and an edge is created between $u$ and $t$. Additionally, if $u$ is not in $\mathbb{Q}$, it is added.

The above process is repeated for all IK solutions of the root node. If required, the algorithm is run $K$ times to find multiple $\epsilon$-GHAs. The resulting one or more $\epsilon$-GHA(s) $\theta^i$ that minimise $J(\theta^i, t_0)$ are returned. All nodes are assigned an IK solution before being placed in the queue. It is also important to note that these assignments do not change during an iteration of the routine, as this could result in unstable behaviour and violation of the $\epsilon$-GHA condition due to changing edge costs. However, path costs may change after finding shorter paths through $G$.

In contrast to the original Dijkstra's algorithm, all nodes are initialised with a non-infinite path cost $c_{\text{max}}$.
Thus, finding a small number of $\epsilon$-GHAs with greater coverage will be favoured, particularly in earlier iterations.

\subsection{Task Sequencing with $\epsilon$-GHAs}\label{sec:task_seq}

\begin{figure*}[!ht]
    \subfloat[]{\includegraphics[trim={0cm 0cm 0.0cm 0cm},clip,width=0.45\columnwidth]{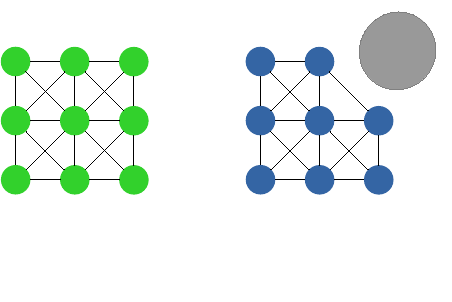}%
    \label{fig:online_plan1}}
    \hfil
    \subfloat[]
    {\includegraphics[width=0.45\columnwidth]{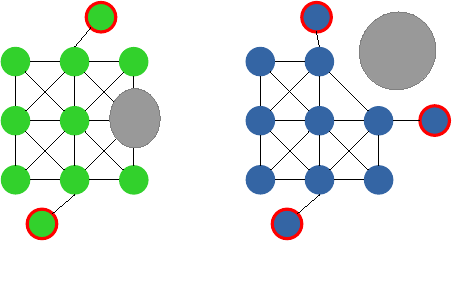}\label{fig:online_plan2}}%
    \hfil
    \subfloat[]
    {\includegraphics[width=0.45\columnwidth]{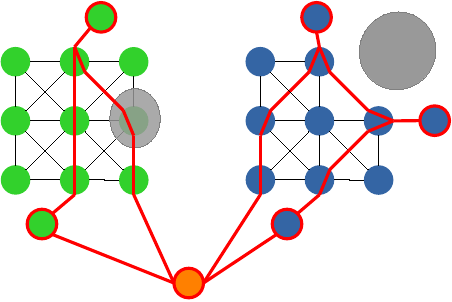}\label{fig:online_plan3}}%
    \hfil
    \subfloat[]{\includegraphics[width=0.45\columnwidth]{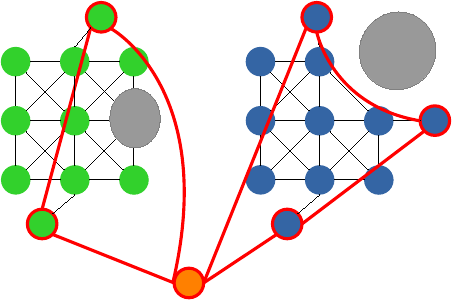}\label{fig:online_plan4}}%
    \caption{Online task sequencing process (Alg.~\ref{alg:online_planner})). Obstacles are grey ellipses, tasks belonging to one subspace are coloured green and the other, blue. \protect\subref{fig:online_plan1} Two subspaces from offline phase and static obstacle. \protect\subref{fig:online_plan2} Online, an obstacle appears and online tasks (highlighted with red border) are matched with closest subspaces and connected to corresponding $\epsilon$-GHA graph. \protect\subref{fig:online_plan3} Tasks grouped by subspace, paths between intra-subspace tasks are computed (only final path sequence shown), home configuration (orange node) appended to each group, sequence computed per group and sequences concatenated. \protect\subref{fig:online_plan4} Paths are adapted and smoothed to account for new obstacles.}
    \label{fig:online_plan}
\end{figure*}

\begin{algorithm}[t]
    \caption{HAP Task Sequencing Overview}\label{alg:online_planner}
	\begin{algorithmic}[1]
	    \State Given an online environment and set of tasks, first compute IK solutions.
            \State Match each task with subspace based on IK solution similarity.\label{alg:online_planner:match}
            \State Group tasks by subspace and retrieve paths between intra-subspace tasks.
            \State Append home configuration to each group to act as intermediate point connecting subspaces.
	    \State Construct weighted adjacency matrix for each group, using retrieved path costs as weights.
	    \State Compute efficient sequence for each group using TSP solver with associated weight matrix and start/end constrained to home configuration.
	    \State Concatenate sequences.\label{alg:online_planner:concat}
	    \State Time-parameterise paths and adapt trajectories to online environment.
	\end{algorithmic}
\end{algorithm}

With each offline task mapped to a unique or reduced set of IK solutions via $\theta$ or $\theta^i$ found offline, the online planner can solve a classical TSP as a proxy for solving the task sequencing problem in Problem~\ref{prob:task_seq}. However, the set of online tasks $T$ provided may not necessarily align with the offline set $\hat{T}$ used to generate the subspace mappings $\theta^i$ in the offline stage, as illustrated in Fig.~\ref{fig:online_scenario}. As such, each task $t \in T$ must first be assigned to one of the subspace mappings $\theta^i$.
This matching process and subsequent task sequencing is outlined in Alg.~\ref{alg:online_planner} and visualised in Fig.~\ref{fig:online_plan}. 

\subsubsection{Matching online tasks to offline subspaces} Beginning with the single map case, to match an online task $t$ with a suitable task-space subspace we query the $k$-closest offline tasks $\{\hat{t}_n\}_{n=1}^k \subset \hat{T}$ to $t$ according to \emph{task-space distance}. For the set of $k$-closest candidates, we retrieve their IK solutions assigned by $\theta$, $\{\theta(\hat{t}_n)\}_{n=1}^k$, and compare them pairwise to all possible IK solutions $q_t \in Q(t)$ for task $t$ using a suitable similarity metric in configuration space. We use the $L_2$-norm in this work. The pair of online/mapped task configurations $(q_t*, \theta(\hat{t}*))$ that minimises this metric is selected as a match.

If multiple $\epsilon$-GHAs were required in the offline stage to decompose the subspace, there may be an overlap, giving a set of suitable matches $\{(q_t^{i}*, \theta^i(\hat{t}^{i}*))\}_{i=1}^{K}$ (one candidate per map) when following the procedure outlined for the single map case above.
To disambiguate this, we choose the first match $q_t^{i}*$ whose similarity value to $\theta^i(\hat{t}^{i}*)$ falls below a given threshold. This reduces the number of checks needed, lowering computation time, and biases matches to subspaces found in earlier iterations, reducing the amount of subspace switching.

Once online tasks are matched, they are connected to the graph of their corresponding $\epsilon$-GHA by adding the edge $(q_t*, \theta(\hat{t}*))$, see Fig.~\ref{fig:online_plan2}. Then online tasks are grouped by equivalent subspace assignment and paths $\pi^*(t_j,t_l)$
are computed pairwise between intra-subspace tasks using a graph search algorithm which can be computed cheaply using the fixed, unique IK solutions in $(\theta^i, E^i)$.

\subsubsection{Task sequencing via classical TSP} With online tasks grouped by subspace and intra-subspace paths generated, a classical TSP can be solved within each subspace. To facilitate moving between subspaces a \emph{home configuration} for the robot is defined. The home configuration is chosen such that all subspaces may be reached via a straight-line path in configuration space. Thus, the home configuration acts as an intermediate point connecting all pairs of subspaces and can be used to connect start and goal poses in separate subspaces by simply joining the straight-line paths to and from it.

To solve the set of classical TSPs in each subspace, weighted adjacency matrices using the path costs $g(\pi^*(t_j,t_l);\theta^i)$ are constructed to define TSP edge costs. Thus, each task group has an associated matrix where the weights correspond to the path cost for each pair of tasks within the group, including the home configuration. These matrices are used as input to a TSP solver, which returns a task sequence for each group independently. The TSP solver is constrained such that each sequence begins and ends at the home configuration. The sequences are concatenated in arbitrary order. This concatenation is always possible by construction of the sequences, which all start and end at the home configuration. 

\subsubsection{Path Post-Processing}\label{sec:traj_adapt}
The final online algorithmic step before a path sequence can be executed by the robot is to time-parameterise the paths and perform post-processing to ensure that they are smooth and time-continuous safe; that is, they are contained entirely within $\mathcal{C}_{\text{free}}$.
We refer to this post-processing as \textit{trajectory adaptation}.

There are a number of existing trajectory optimisation algorithms that are designed for similar purposes. In general, these algorithms require an initial seed trajectory as input, which they attempt to adapt to satisfy given constraints and maximise/minimise a given objective. In this work we can conveniently use the time-parameterised modified subspace path, $\pi^*(t_j,t_l)$, as the seed trajectory.

Unfortunately, trajectory optimisation approaches are not guaranteed to succeed in finding a solution and depend heavily on the choice of seed trajectory. The seed trajectory already accounts for obstacles known at the time of construction and is preferable to less informed choices, such as a straight-line trajectory in configuration space. However, a fallback method remains necessary in case of failure due to obstacles discovered or a task not lying within any subspace at execution time.
In such an event, a probabilistically complete planner is used with the closest IK solution assignment, $q^*_t$ used as its goal configuration input. Probabilistically complete methods may still fail to find a solution in a reasonable amount of time. Thus, HAP enforces a user-defined threshold and terminates execution if exceeded. This is the only condition in which HAP fails to produce an executable trajectory.

\subsection{Practical Considerations}
\subsubsection{Encouraging exploration}\label{sec:exploration}
While searching for $\epsilon$-GHA(s), it is beneficial to bias the search toward the unexplored region of the task space. 
To encourage subspace exploration in subsequent iterations of the routine, a penalty $\rho \cdot \omega(t)$ may be added to all edge costs passing through a node, where $\omega(t)$ is a count of how many times a node is assigned an IK solution, and $\rho$ is a tunable parameter. 
As the penalties $\rho \cdot \omega(t)$ accumulate, previously mapped portions of the graph are not considered in later iterations of the algorithm as their path costs may exceed $c_{\text{max}}$. 
The algorithm then focuses on previously unmapped nodes to increase coverage of $\hat{T}$. 

\subsubsection{Ensuring smooth transitions between subspaces}\label{sec:transition_subspaces}

As previously mentioned, when moving between subspaces the arm first moves back to a fixed home configuration.
Based on a given home pose, the corresponding IK solution that minimises the distance to the average configuration $q_{\text{avg}}^0$ of the first found subspace $\mathcal{C}^0$ are chosen.
Note that the home configuration can be computed online allowing for flexible home pose choice. 

To avoid large changes in configuration while returning to home, the subspaces are biased to be close to one other. For $i \geq 1$ in Alg.~\ref{alg:gen_database} an additional penalty $\rho_s \cdot | \theta^i(u) - q_{\text{avg}}^0 |$ with user-defined weighting $\rho_s$ is added to the edge cost in~\eqref{eq:edge_cost}. This way, chosen IK solutions are biased to be close in configuration space to $q_{\text{avg}}^0$. 
In addition, one can optionally enforce that the IK solution assignment for the root node $t_0$ is within some distance threshold, i.e., $|\theta^i(t_0) - q_{\text{avg}}^0| < \zeta$.

\subsubsection{Balancing the number of subspaces}
If many undefined node mappings remain after an iteration of Alg.~\ref{alg:mod_dijkstra}, $\theta$ may not cover large regions of the task space. This may occur due to
poor flexibility admitted by the robotic manipulator's natural kinematic configurations (demonstrated in Fig.~\ref{fig:reachable_comparison}). This can have adverse impact on online planning if IK solutions of online tasks are far from mapped subspace configurations, leading to failures or jerky  trajectories. As such, Alg.~\ref{alg:gen_database} terminates only when $\hat{T}_{\text{open}} = \emptyset$ and a complete set of subspaces that fully cover the task space are found.
Note that this condition may never be met. For example, a region requiring large configuration changes to switch to may have undefined mapping if a conservative $\zeta$ is chosen.

In practice a large number of subspaces is undesirable as it can slow subspace matching during online planning. Furthermore, frequent subspace switching can add cumbersome overhead to online execution. Thus, to balance a trade-off between coverage and planning/execution time the main loop in Alg.~\ref{alg:gen_database} can be terminated once a user-defined maximum number of subspaces have been found or until the $\zeta$ threshold cannot be satisfied.
Alternatively, the loop can be terminated once a certain task-space coverage percentage has been achieved or when the size of the subspace found in an iteration falls below a set threshold.

\subsubsection{Modifications for mobile bases}\label{sec:mobile_base}

Greater flexibility in subspace assignment and larger workspaces can potentially be achieved by allowing the arm to be mobile. Having multiple $\epsilon$-GHAs synergises well with a mobile manipulator. Instead of allocating an arbitrary number of $\epsilon$-GHAs, one can choose a discrete number of base poses and assign $\epsilon$-GHAs to each base pose.

When computing the task-space decomposition for a mobile base, the \textit{generate map} algorithm in Alg.~\ref{alg:mod_dijkstra} is run for all base poses in each iteration in Alg.~\ref{alg:gen_database}. The pose that yields the lowest objective cost is allocated a $\epsilon$-GHA.
This base pose is then removed as a candidate in the subsequent iterations. This is repeated until all base poses have been assigned a subspace or until $\hat{T}_{\text{open}} = \emptyset$.

For mobile base online execution, the subspace switching action consists of moving back to the home configuration before moving the base. Furthermore, when sequencing for the mobile base case we ignore the base movement cost when switching between subspaces. However, if this were important then sequencing the group execution order could be solved using another TSP with the base movement costs.

\subsubsection{Task-space graph construction} \label{sec:practical:graph}
An example of an undirected graph on $\hat{T}$ is visualised in Fig.~\ref{fig:offline_edges}, where nodes are tasks in $\hat{T}$ and edges are connections between tasks. The construction strategy of such a graph is flexible, however there are a few important considerations to highlight.
Firstly, greater edge connection density allows for more diverse paths and greater node density increases the probability of time continuous safety being met by the retrieved subspace paths, $\pi^*(t_j,t_l)$.
Additionally, edges connect nodes in task space and hence a local connection strategy needs to be devised to ensure they are feasible in the configuration space. The connection strategy used in this paper is to connect nodes within a ball of specified radius in the workspace. To ensure feasibility, it is required that an IK solution in $\mathcal{C}_{\text{free}}$ exists for a discrete set of points along the edge connecting two nodes.

The $\hat{T}$ upon which the graph is built is a discretisation of the user-defined task space. In the context of this work, it represents approximate poses that the arm is expected to plan to. An example discretisation strategy is visualised in Fig.~\ref{fig:offline_grid}. Poses here could represent abstract tasks such as pre-grasp points or camera viewpoints for active perception. The orientation of the poses in $\hat{T}$ should align roughly with the expected tasks. For example, in Fig.~\ref{fig:offline_grid} all poses point forward into the bookshelf, a suitable construction for tasks such as grasping and scene reconstruction. Notice in the online scenario in Fig.~\ref{fig:online_scenario} the task poses need not lie exactly on $\hat{T}$. However, performance may decrease the greater this discrepancy is. While this is a suitable choice for the given environment our framework is perfectly capable of searching over task spaces with varying orientations as long as a valid task space distance metric is used, see sec V.

The poses in this work are generated procedurally by defining a uniform graph of nodes across a volume bounding the bookshelf; however, there exists many possible methods for generating these poses. For example, the manipulator could be teleoperated to various poses or the manipulator could be moved kinesthetically. That is, the human operator could move the end effector directly and store the poses. However, care should be taken such that no islands are formed in the graph. For example, in the bookshelf scenario there is a plane of poses in front of the bookshelf to ensure that there is connectivity between poses within the shelves. However, this could potentially be resolved in a post-processing step automatically, relieving the burden on the operator.

\section{ANALYSIS}
In this section, we show that the paths found by Alg.~\ref{alg:gen_database} have bounded lengths $d_{C}$ and are hence efficient and free of jerky motion. 
This is because, as we show, $\epsilon$-GHAs approximately preserve shortest paths between metric spaces, in our case the task and configuration spaces. 
We first verify that the map $\theta$ found by Alg.~\ref{alg:gen_database} is indeed an $\epsilon$-GHA (definition.~\ref{definition_eCHA}).
\begin{theorem}[$\theta$ is an $\epsilon$-GHA]
$\theta$ found by Alg.~\ref{alg:mod_dijkstra} is an $\epsilon$-GHA. 
\end{theorem}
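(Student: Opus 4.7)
My plan is to argue in two steps: first verify the $\epsilon$-GHA bound locally on every edge of the subgraph produced by the algorithm, then lift this local bound to arbitrary pairs of nodes in $\hat T$. The local step is essentially bookkeeping. Inspection of line~\ref{alg:distance_constraint} of Alg.~\ref{alg:get_candidate} shows that whenever a node $u$ is assigned a mapping through its already-expanded neighbour $t$, the chosen candidate $p \in Q(u)$ must satisfy $d_C(q_t, p) < \epsilon + d_T(u, t)$. Taking $p = \theta(u)$ and $q_t = \theta(t)$ gives $d_C(\theta(t), \theta(u)) - d_T(t, u) < \epsilon$ for every edge $(t,u) \in \bar E$ assembled during the Dijkstra-like expansion. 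For the reverse direction $d_T(t,u) - d_C(\theta(t),\theta(u)) < \epsilon$, I would appeal to the edge feasibility check in Section~\ref{sec:practical:graph}, which requires a continuous IK thread along each task-space edge, so that $d_T(t,u)$ cannot outrun $d_C(\theta(t),\theta(u))$ by more than $\epsilon$.

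Second, I would extend this to non-adjacent pairs. For any $t_j, t_l$ in the mapped portion of $\hat T$, both are reached from the root $t_0$ by a sequence of edges in $\bar E$, and the modified Dijkstra maintains a shortest-path tree through which the two nodes are connected. I would induct along this path using triangle inequalities in $d_C$ and $d_T$ to propagate the edge-level inequality into a bound on $|d_T(t_j,t_l) - d_C(\theta(t_j),\theta(t_l))|$. A single application of the Gromov--Hausdorff distortion identity, combined with the fact that $\theta$ restricted to the path is a discrete isometric embedding up to error $\epsilon$, should deliver the global statement.

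The main obstacle I expect is the classical telescoping problem: naively summing the local bound along an $N$-edge path gives an $N\epsilon$ slack rather than $\epsilon$. To collapse this, I would exploit the fact that $d_T$ on $\hat T$ is itself realised along graph edges by construction, so that for a shortest path $t_j = t^{(0)},\dots,t^{(N)} = t_l$ one has $d_T(t_j,t_l) = \sum_i d_T(t^{(i)},t^{(i+1)})$; telescoping the per-edge inequality $d_C(\theta(t^{(i)}),\theta(t^{(i+1)})) < d_T(t^{(i)},t^{(i+1)}) + \epsilon$ then appears to reintroduce the $N\epsilon$ slack, and the resolution is to interpret the single $\epsilon$ as bounding the worst-case per-edge distortion inherited by the path metric, which is the natural reading in the Gromov--Hausdorff setting. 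If this interpretation cannot be sustained without further hypotheses, the cleanest backup is to take $d_T$ to be the induced graph metric on $\hat T$ rather than the ambient metric, under which the claim follows directly from the edge-level constraint and the inductive argument along the shortest-path tree, with no dependence on path length.
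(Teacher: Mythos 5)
Your proposal follows essentially the same route as the paper's proof: establish the two-sided bound $|d_C(\theta(t),\theta(u))-d_T(t,u)|<\epsilon$ on each edge by construction of the assignment step, then chain it to arbitrary pairs via the triangle inequality along a connecting path, arriving at an $N\epsilon$ slack that is absorbed by rescaling --- the paper resolves the telescoping issue you worry about in exactly the way your ``interpretation'' suggests, by declaring ``as $\epsilon$ is arbitrary, taking $\epsilon$ to be $N\epsilon$ gives the required result.'' The one point where you diverge is the reverse inequality $d_T(t,u)-d_C(\theta(t),\theta(u))<\epsilon$: your appeal to the edge-feasibility check is a non sequitur (a continuous thread of valid IK solutions along an edge gives no quantitative lower bound on $d_C$ at the endpoints), but this extra machinery is unnecessary, since the paper takes the full two-sided condition~\eqref{eGHA} to be enforced per edge by the candidate-filtering step of Alg.~\ref{alg:get_candidate} and simply states it ``by construction.''
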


\begin{proof}
Pick any $t\in\hat{T}$. Then $\forall u_1 \in \mathcal{N}_{t}$, by construction, we have 
\begin{equation*}
    |d_C(\theta(t), \theta(u_1)) - d_T(t, u_1)| < \epsilon,
\end{equation*}
for some $\epsilon > 0$.
Then, $\forall u_2 \in \mathcal{N}_{u_1}$, i.e. the next-nearest neighbours of $t$, we again have by construction
\begin{equation*}
    |d_C(\theta(u_1), \theta(u_2)) - d_T(u_1, u_2)| < \epsilon.
\end{equation*}
Similarly for all $(N-1)$th and $N$th nearest neighbours of $t$,
\begin{equation*}
    |d_C(\theta(u_{N-1}), \theta(u_N)) - d_T(u_{N-1}, u_N)| < \epsilon.
\end{equation*}
Then, using the triangle inequality we get
\begin{equation*}
\begin{array}{l}
    |d_C(\theta(t), \theta(u_N)) - d_T(t, u_N)| \\
    \leq |d_C(\theta(t), \theta(u_1)) - d_T(t, u_1)| +\\ 
    \sum_{n=1}^{N-1}|d_C(\theta(u_n), \theta(u_{n+1})) - d_T(u_n, u_{n+1})|\\
    < \epsilon + (N-1)\epsilon\\
    = N\epsilon.
\end{array}
\end{equation*}
As $\epsilon$ is arbitrary, taking $\epsilon$ to be $N\epsilon$ gives the required result.
\end{proof}

Furthermore, $\theta$ maps shortest paths in the workspace to paths in configuration space that are of bounded length. That is, \emph{minimising geodesics} are approximately preserved under the mapping. Intuitively, minimising geodesics (referred to henceforth as geodesics for brevity) are a generalisation of ``straight lines", or shortest paths, in Euclidean space to more general spaces, defined below.

A metric space $(X, d_{X})$ is a set $X$ equipped with a metric, or ``distance function", $d_{X} : X \times X \rightarrow \mathbb{R}$ that satisfies the axioms of positiveness, symmetry, and triangular inequality~\cite{course_metric_geometry}.
Drawing upon concepts from metric geometry, we characterise geodesics on metric spaces as paths whose segment lengths sum to that of the whole path length. Formally,

\begin{definition}[Geodesics]\label{def:geodesic}
Given a metric space $(X, d_{X})$ with intrinsic metric $d_{X}$, a path $\gamma : [0, 1] \rightarrow X$ is a geodesic iff:
\begin{equation}\label{eq:geodesic}
    d_{X}(\gamma(0), \gamma(1)) = \sum_{n} d(\gamma(s_{n}), \gamma(s_{n+1})),
\end{equation}
for any ordered $\{s_{n}\} \subset [0, 1]$ with first and last entries equal to $0$ and $1$, respectively. 
\end{definition}%

We now show that an $\epsilon$-GHA, and thus $\theta$ found by HAP, preserves geodesics approximately.

\begin{theorem}[$\epsilon$-GHAs preserve geodesics]
Let $(X, d_{X})$ and $(Y, d_{Y})$ be two metric spaces, and $\theta : X \rightarrow Y$ an $\epsilon$-GHA. Then, for any geodesic $\gamma$ on $X$, we have
\begin{equation}
\begin{split}
    & |d_{Y}(\theta(\gamma(0)), \theta(\gamma(1))) - 
     \sum_{n} d_{Y}(\theta(\gamma(s_{n})), \theta(\gamma(s_{n+1})))| \\
    & \leq (N+1) \epsilon.
\end{split}
\end{equation}
In other words, the path $\theta(\gamma(s)) : [0,1] \to Y$ is $(N+1) \epsilon$ away from being a geodesic in the configuration space.  
\end{theorem}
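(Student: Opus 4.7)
The plan is to derive the bound directly from the $\epsilon$-GHA definition (Definition~\ref{definition_eCHA}), which controls the distortion of $d_X$-distances under $\theta$ by $\epsilon$, combined with the geodesic property of $\gamma$ (Definition~\ref{def:geodesic}), which says that segment lengths along $\gamma$ sum exactly to the endpoint distance in $X$. The rest is a two-step triangle inequality estimate in $\mathbb{R}$.

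First I would apply the $\epsilon$-GHA inequality to the pair of endpoints $\gamma(0), \gamma(1)$ to obtain
\[
|d_Y(\theta(\gamma(0)), \theta(\gamma(1))) - d_X(\gamma(0), \gamma(1))| < \epsilon.
\]
Next, I would apply the same inequality to each of the $N$ consecutive pairs $(\gamma(s_n), \gamma(s_{n+1}))$ appearing in the partition, giving
\[
|d_Y(\theta(\gamma(s_n)), \theta(\gamma(s_{n+1}))) - d_X(\gamma(s_n), \gamma(s_{n+1}))| < \epsilon
\]
for each $n$. Summing these $N$ inequalities and using the triangle inequality in $\mathbb{R}$ yields
\[
\left| \sum_n d_Y(\theta(\gamma(s_n)), \theta(\gamma(s_{n+1}))) - \sum_n d_X(\gamma(s_n), \gamma(s_{n+1})) \right| < N\epsilon.
\]

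Now I would invoke the geodesic hypothesis, which replaces the sum $\sum_n d_X(\gamma(s_n), \gamma(s_{n+1}))$ by $d_X(\gamma(0), \gamma(1))$ exactly. Chaining the endpoint bound with the summed bound through a final triangle inequality gives the desired estimate $(N+1)\epsilon$. There is no real obstacle here: the result is essentially a bookkeeping exercise once both ingredients are in place, and the $(N+1)$ factor arises naturally as ``one $\epsilon$ for the endpoint distortion plus $N$ for each segment distortion''. The only minor subtlety worth being explicit about is that the geodesic property is used in $X$ (where $\gamma$ lives), not in $Y$; indeed the whole point of the theorem is to quantify how far $\theta \circ \gamma$ fails to be a geodesic in $Y$, and the bound $(N+1)\epsilon$ makes this failure vanish in the limit $\epsilon \to 0$ regardless of partition refinement $N$, provided $\epsilon$ is small compared to $1/N$.
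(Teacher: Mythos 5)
Your proposal is correct and follows essentially the same route as the paper's proof: one application of the $\epsilon$-GHA bound to the endpoint pair, $N$ applications to the consecutive segment pairs, the geodesic identity to exchange $d_X(\gamma(0),\gamma(1))$ for the sum of segment lengths, and a triangle inequality to combine them into the $(N+1)\epsilon$ bound. The only difference is cosmetic ordering --- the paper substitutes the geodesic identity into the endpoint inequality before distributing the per-segment bounds, whereas you sum the per-segment bounds first --- so there is nothing further to add.
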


\begin{proof}
Applying the $\epsilon$-GHA condition to the end-points of $\gamma$, we have
\begin{equation*}\label{eq:proof-egha}
    | d_{Y}(g(\gamma(0)), g(\gamma(1))) - d_{X}(\gamma(0), \gamma(1)) | \leq \epsilon. 
\end{equation*}

Because $\gamma$ is a geodesic, we can replace the $d_{X}(\gamma(0), \gamma(1))$ term above with the sum of lengths along any test points $\{s_{1}, ..., s_{N}\}$,
\begin{equation*}
    | d_{Y}(g(\gamma(0)), g(\gamma(1))) - \sum_{n} d_{X}(\gamma(s_{n}), \gamma(s_{n+1})) | \leq \epsilon.
\end{equation*}

Using the $\epsilon$-GHA condition on summands individually, we have
\begin{equation*}
\begin{split}
     & \sum_{n} d_{Y}(g(\gamma(s_{n})), g(\gamma(s_{n+1}))) - (N+1)\epsilon \\
     & \leq d_{Y}(g(\gamma(0)), g(\gamma(1))) \\
     &\leq \sum_{n} d_{Y}(g(\gamma(s_{n})), g(\gamma(s_{n+1}))) + (N+1)\epsilon.    
\end{split}
\end{equation*}
$N+1$ arises because there are $N$ epsilons inside the sum and one outside. 
\end{proof}

\section{EXPERIMENTS}

\begin{figure*}[t!]
\centering
\subfloat[]{\includegraphics[trim={0.6cm 0.0cm 1.3cm 0.0cm},clip,width=0.329\textwidth]{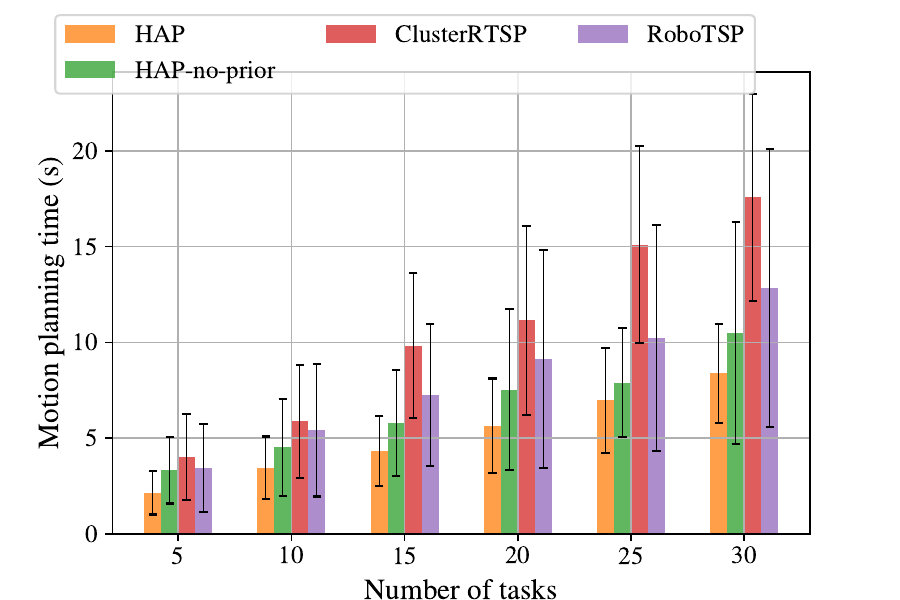}%
\label{fig:sawyer_motion_plan}}
\hfil
\subfloat[]{\includegraphics[trim={0.0cm 0.0cm 1.4cm 0.0cm},clip,width=0.345\textwidth]{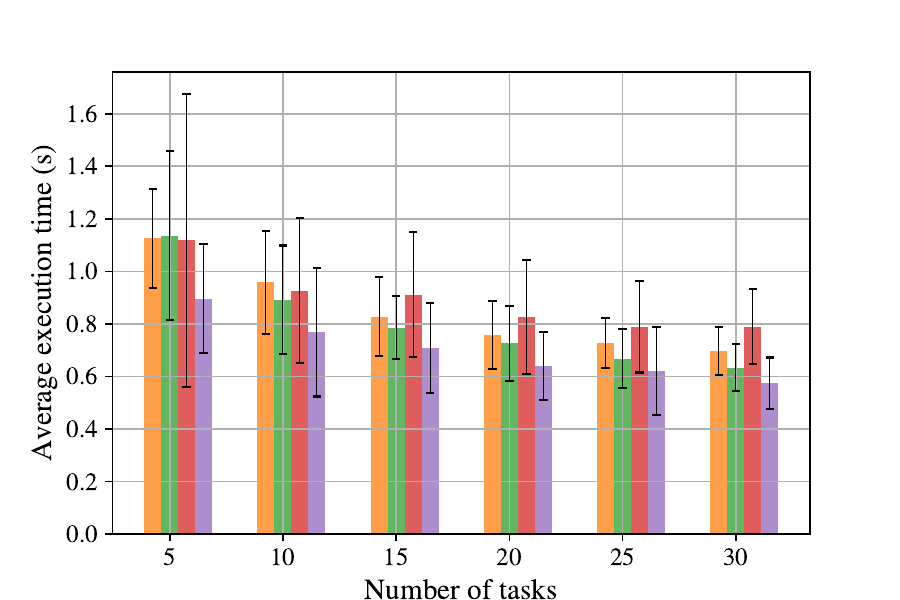}%
\label{fig:sawyer_exec_time}}
\hfil
\subfloat[]{\includegraphics[trim={0.6cm 0.0cm 1.3cm 0.0cm},clip,width=0.324\textwidth]{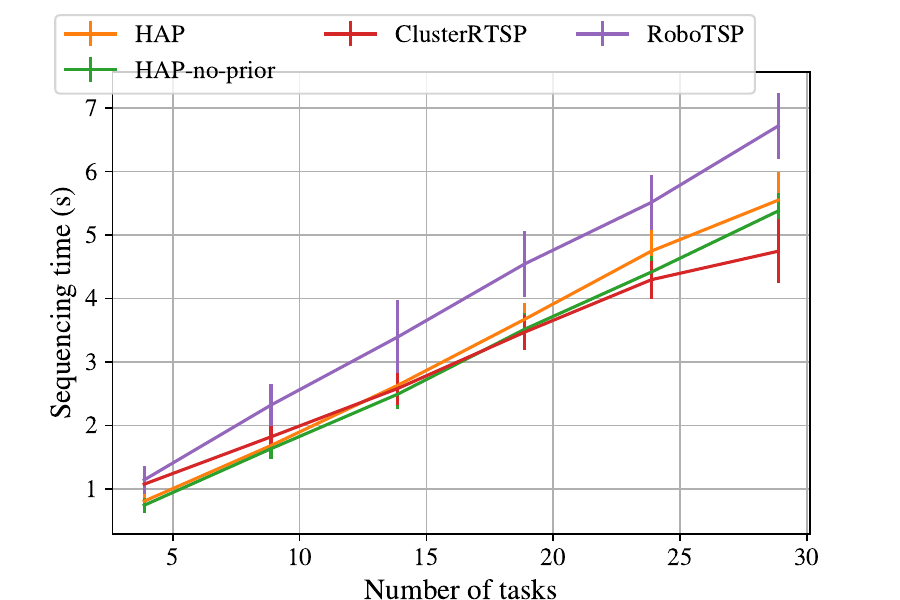}%
\label{fig:sawyer_sequence_time}}
\hfil
\caption{Time benchmarks for Sawyer bookshelf experiments with varying number of tasks. \protect\subref{fig:sawyer_motion_plan}~Total motion planning times for HAP variants remain low while RoboTSP and Cluster-RTSP steadily increases. \protect\subref{fig:sawyer_exec_time}~Average trajectory execution times for all methods decrease with increasing number of tasks. \protect\subref{fig:sawyer_sequence_time}~Task sequencing times increase relatively linearly across all benchmarks with RoboTSP being slightly worse.}
\label{fig:sawyer_time_benchmarks}
\end{figure*}

\begin{figure*}[t!]
\centering
\subfloat[]{\includegraphics[trim={0.6cm 0.0cm 1.5cm 0.0cm},clip,width=0.329\textwidth]{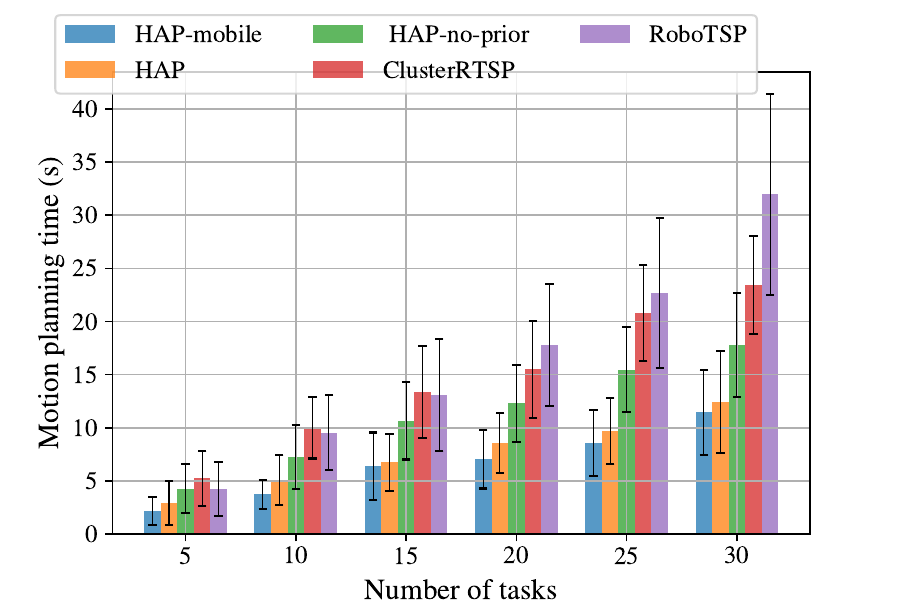}%
\label{fig:ur5_motion_plan}}
\hfil
\subfloat[]{\includegraphics[trim={0.0cm 0.0cm 1.4cm 0.0cm},clip,width=0.345\textwidth]{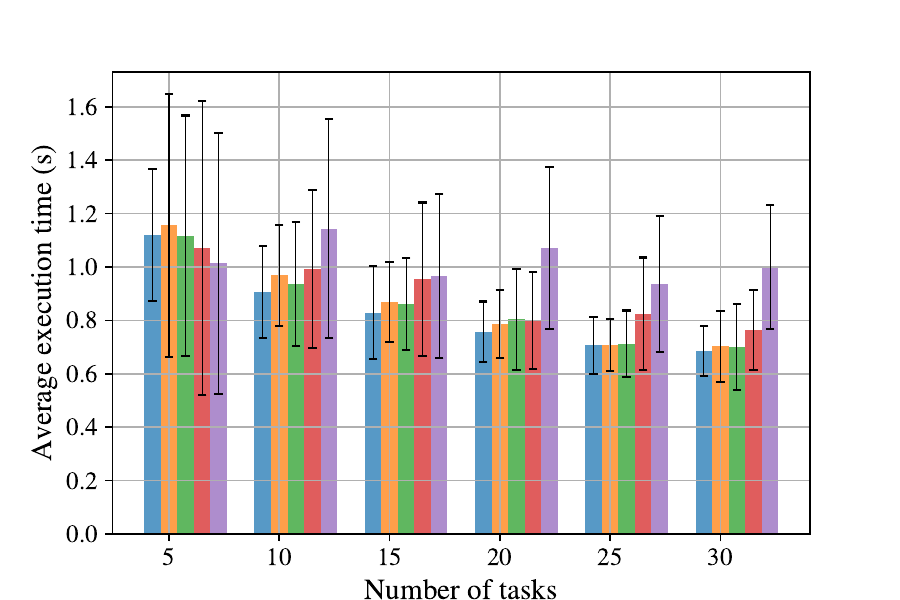}%
\label{fig:ur5_exec_time}}
\hfil
\subfloat[]{\includegraphics[trim={0.3cm 0.0cm 1.3cm 0.0cm},clip,width=0.324\textwidth]{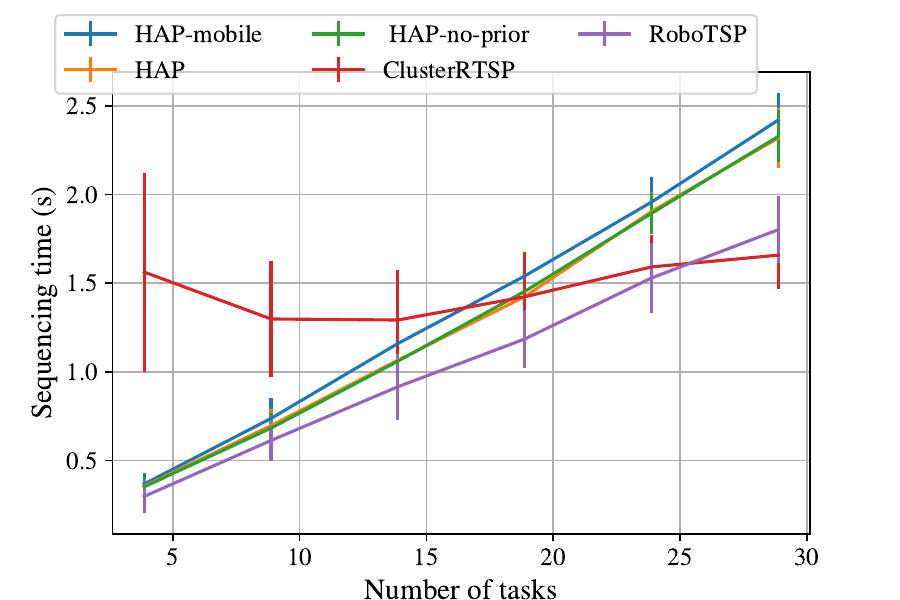}%
\label{fig:ur5_sequence_time}}
\hfil
\caption{Time benchmarks for UR5 bookshelf experiments with varying number of tasks. \protect\subref{fig:ur5_motion_plan}~Total motion planning times show HAP benefiting more from subspace trajectory priors and RoboTSP's performance deteriorating. \protect\subref{fig:ur5_exec_time}~Average trajectory execution times for HAP and Cluster-RTSP decrease while RoboTSP remains
approximately constant. \protect\subref{fig:ur5_sequence_time}~Task sequencing times follow a similar trend to Fig.~\ref{fig:sawyer_sequence_time} with the exception of Cluster-RTSP.}
\label{fig:ur5_time_benchmarks}
\end{figure*}

\begin{figure}[t!]
\centering
\subfloat[Sawyer bookshelf experiments]{\includegraphics[width=0.4\textwidth]{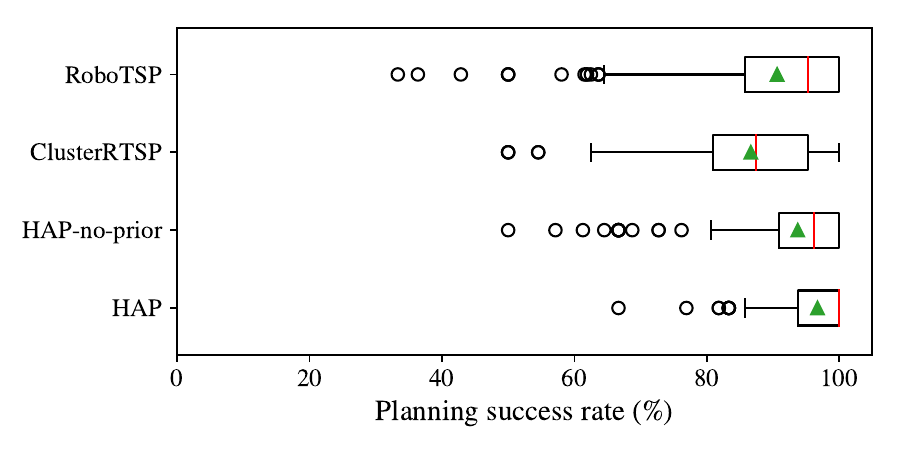}%
\label{fig:sawyer_plan_success}}
\hfil
\subfloat[UR5 bookshelf experiments]{\includegraphics[width=0.4\textwidth]{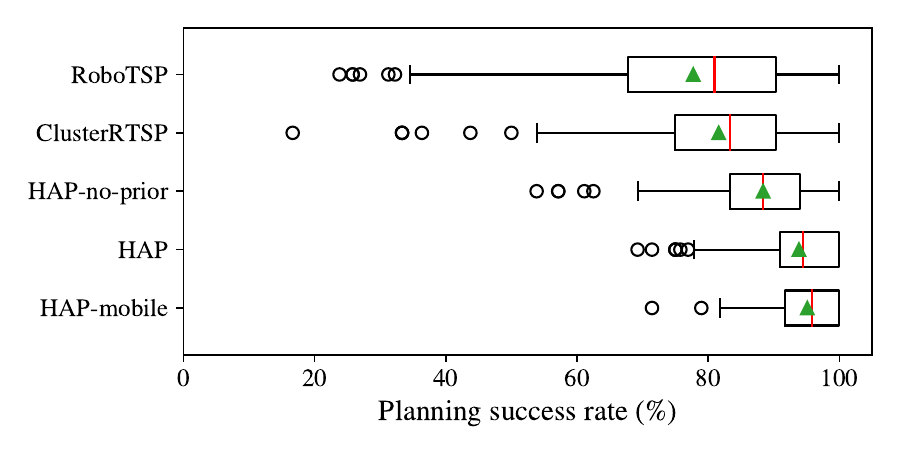}%
\label{fig:ur5_plan_success}}
\hfil
\caption{Planning success rates for \protect\subref{fig:sawyer_plan_success}~Sawyer and \protect\subref{fig:ur5_plan_success}~UR5 across all bookshelf experiments. Red lines indicate medians and green triangles means.}
\label{fig:bookshelf_plan_success}
\end{figure}

This section presents several sets of experimental results that demonstrate and evaluate HAP's performance in a complex simulated bookshelf environment using robot manipulators with varying kinematic models. For comparison, we also present results obtained using state-of-the-art task sequencing methods.

\subsection{Experimental Setup}
Experiments consist of sets of randomised trials where tasks are sampled uniformly at random from the environment's task space. The number of tasks range from 5 to 30 and for each setting we run 50 trials to gauge the robustness of each method.

We compare HAP with competitive contemporary RTSP solver baselines. Both methods compute a sequence of goal IK solutions and then use the same trajectory adaption process as HAP, except with a straight-line trajectory prior for the trajectory optimisation planner:
\begin{itemize}
    \item \textit{RoboTSP~\cite{suarez2018robotsp}:} Initially solves a TSP in task space. Given this sequence, an optimal joint configuration is assigned to each task using a graph search algorithm such that the total path length (Euclidean distance in configuration space) through each configuration is minimised.
    \item \textit{Cluster-RTSP~\cite{wong2020novel}:} Begins by assigning a unique configuration to each task based on a best-fit heuristic. This essentially tries to choose a set of configurations that are all close to each other. The configurations are then clustered into similar groups, based on proximity in configuration space. A configuration sequence is then found by solving inter-cluster and intra-cluster TSPs individually.
\end{itemize}

The following metrics are evaluated:
\begin{itemize}
    \item \textit{Planning success rate} refers to the percentage of tasks for which the trajectory adaption process succeeded in motion planning a trajectory in $\mathcal{C}_{\text{free}}$. 
    \item \textit{Motion planning time} refers to the total computation time taken for the trajectory adaption process to compute motion plans for all tasks. 
    \item \textit{Average execution time} refers to the average time taken per task for a successful trajectory execution, assuming the arms are operating at their maximum joint velocities. Note that for HAP we count inter-subspace trajectories, that is the transition to and from the home configuration, as one task.
    \item \textit{Maximum trajectory jerk} refers to the maximum joint jerk norm of an executed trajectory. This is computed numerically using the finite difference method.
    \item \textit{Sequencing time} for HAP refers to the time taken to match tasks, retrieve subspace paths and then compute a sequence (Alg.~\ref{alg:online_planner}~lines~\ref{alg:online_planner:match}-\ref{alg:online_planner:concat}). Additionally, online task IK solution computation time is included for all methods.
\end{itemize}

The robot models used are a 7-DOF Rethink Robotics Sawyer and a 6-DOF Universal Robots UR5. The UR5 also has the option to be mobile, which we refer to as HAP-Mobile. 
We additionally include a variant of HAP which, similar to RoboTSP and Cluster-RTSP, utilises a straight-line trajectory prior for trajectory optimisation, referred to as HAP-no-prior.

All benchmarks are run on the bookshelf environment shown in Fig.~\ref{fig:hap_overview}. The task-space decomposition is carried out with the empty bookshelf and in the online scenarios objects are added. The computed subspaces are shown in Figs.~\ref{fig:reachable_comparison}-\ref{fig:ur5_mobile_subspaces}.

\subsection{Implementation Details}

\begin{table}[]
\begin{minipage}{\columnwidth}
    \centering
    \begin{center}
    \begin{tabular}{ ll } 
    \toprule
    \textbf{HAP} & $\bm{501.00 \pm 203.56}$ \\
    \midrule
    HAP-no-prior & $1415.43 \pm 820.88$ \\
    \midrule
    Cluster-RTSP~\cite{wong2020novel} & $2607.88 \pm 2130.33$ \\
    \midrule
    RoboTSP~\cite{suarez2018robotsp} & $2390.36 \pm 1269.95$ \\
    \bottomrule
    \end{tabular}
    \end{center}
    \caption{Maximum trajectory jerk values (rad $\cdot$ s$^{-3}$) for Sawyer bookshelf experiments. Mean and standard deviation are computed over all successful trajectories.}
    \label{table:sawyer_jerks}
\end{minipage}\hfill
\begin{minipage}{\columnwidth}
    \centering
    \begin{center}
    \begin{tabular}{ ll }%
    \toprule
    HAP-mobile & $1968.07 \pm 772.99$ \\
    \midrule
    \textbf{HAP} & $\bm{1743.73 \pm 649.29}$ \\
    \midrule
    HAP-no-prior & $4048.48 \pm 2378.24$ \\
    \midrule
    CluserRTSP~\cite{wong2020novel} & $5651.06 \pm 4776.34$ \\
    \midrule
    RoboTSP~\cite{suarez2018robotsp} & $3615.76 \pm 2664.09$ \\
    \bottomrule
    \end{tabular}
    \end{center}
    \caption{Maximum trajectory jerk values (rad $\cdot$ s$^{-3}$) for UR5 bookshelf experiments.}
    \label{table:ur5_jerks}
\end{minipage}
\end{table}

In implementing Alg.~\ref{alg:gen_database}, the number of root nodes, $\| \hat{T}_{\text{root}} \| = 10$. The maximum number of $\epsilon$-GHAs was set to $5$ for the UR5 experiments. Sawyer experiments only utilised a single $\epsilon$-GHA~due the flexibility in IK solutions afforded by its kinematic redundancy.
In implementing Algs.~\ref{alg:mod_dijkstra}-\ref{alg:update_mapping}, all nodes are initialised with path cost $c_{\text{max}} = 5.0$. The subspace exploration penalty $\rho=2.0$. The subspace distance biasing penalty $\rho_{s}=0.02$. Experiments with UR5 model use $\epsilon=0.35$, and those with Sawyer model use $\epsilon=0.85$. Distance $d_C$ is defined as $L_{\infty}$, and $d_T$ as $L_2$. Both metrics use uniformly weighted distances.

For the online planner (Alg.~\ref{alg:online_planner}), the number of closest neighbours used when retrieving $\pi^*(t_j,t_l)$ is $k=10$. 
The threshold used for terminating the search over mappings when matching a task configuration is $L_2$ distance $0.7$. Google's or-tools~\cite{ortools} package is used as the TSP-solver in~\eqref{eq:task_sequencing}. 

The discretised task space, $\hat{T}$, used is shown in Fig.~\ref{fig:offline_grid}.
The home pose used is shown in Fig.~\ref{fig:online_scenario}.
For the mobile base experiments, a discrete set of possible base positions is defined uniformly along the $y$-axis, parallel to the bookshelf, see Fig.~\ref{fig:ur5_mobile_subspaces}.

The trajectory optimisation algorithm we use for online adaption is TrajOpt with default implementation from~\cite{ortrajoptGit}. The fallback probabilistic method used is BIT* with a timeout limit of 2 seconds and otherwise default implementation from~\cite{omplGit}.

The simulation environment used is OpenRAVE with IKFast kinematics solver~\cite{diankov2008openrave}. To generate a finite set of IK solutions for the 7-DOF Sawyer, IKFast sets the second DOF as a free joint with discretisation increments of $0.01$ radians.

For the benchmark methods RobotTSP and Cluster-RTSP, the default parameters are used in the provided code implementations~\cite{robotspGit,clusterrtspGit}. The only difference being that the maximum number of clusters for Cluster-RTSP was modified to be the number of tasks.

\subsection{Results}

Time benchmarks for the Sawyer and UR5 bookshelf experiments are shown in Figs.~\ref{fig:sawyer_time_benchmarks}~and~\ref{fig:ur5_time_benchmarks}. Average maximum trajectory jerk values are shown in Tables~\ref{table:sawyer_jerks}~and~\ref{table:ur5_jerks}.

The box-and-whisker plots in Fig.~\ref{fig:bookshelf_plan_success} show that HAP variants achieved favourable success rate relative to baseline methods. Notably, the UR5 benefits from subspace trajectory priors more so than the Sawyer which is able to utilise its kinematic redundancy to better navigate around obstacles. 

Motion planning times for HAP variants in the UR5 experiments greatly outperform benchmarks with approximately up to 3x speedup compared to robotsp and 2x compared to cluster-RTSP with consistently higher planning success rates and whilst retaining similar task sequencing time scaling. For a plan to fail, both TrajOpt and BIT* must timeout. Thus, low planning success rates explain the higher motion planning times exhibited by the baseline methods.

HAP variants and Cluster-RTSP saw a downward trend in execution times as the number of tasks increased for both experiments, whereas RoboTSP's fluctuated for the UR5 case. Whilst HAP's execution times were similar to Cluster-RTSP, and RoboTSP in the case of Sawyer experiments, the average maximum trajectory jerk values in Tables~\ref{table:sawyer_jerks}~and~\ref{table:ur5_jerks} were up to approximately 5x lower with low variance. It should be noted that trajectory execution times describe successful trials only and should be interpreted in conjunction with corresponding planning success rates. 

Furthermore, the HAP-mobile variant achieved comparable results to the static case, confirming HAP's ability to generalise to mobile bases. Interestingly, the mobile and static variants utilised on average 2.45 and 1.9 subspaces per trial, respectively. This can be explained by the greater subspace coverage and diversity provided by the mobile base (see Fig.~\ref{fig:ur5_subspaces}), hence the slightly better planning and success rates.

Total offline-computation time was 108s, 318s and 569s for the UR5, Sawyer and UR5 with mobile base, respectively. It should be noted, however, that this could be substantially sped up through various optimisations such as parallelisation of the candidate $\epsilon$-GHA map search (Alg.~\ref{alg:gen_database}~lines~\ref{alg:candidate_egha_begin}-\ref{alg:candidate_egha_end}) and IK solution computation.

\subsection{Discussion}

\begin{figure}[!ht]
\centering
\subfloat[$\sigma_{RoboTSP}\lbrack2:3\rbrack $]{\includegraphics[width=0.49\columnwidth]{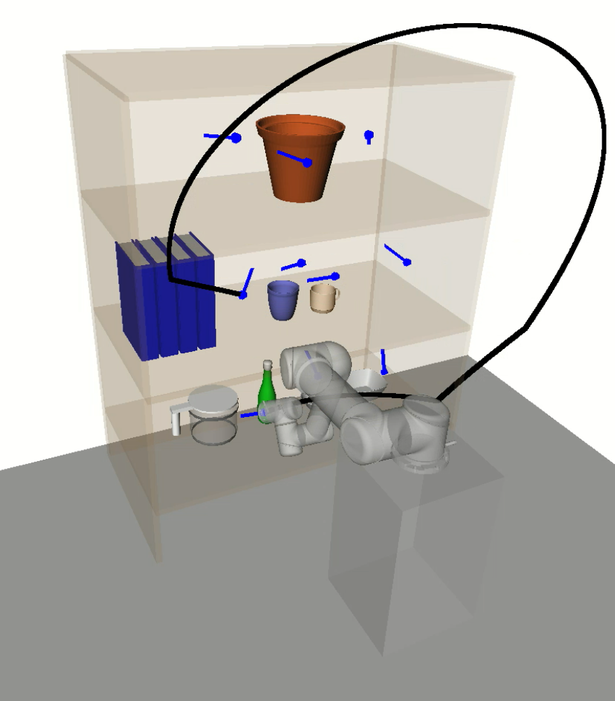}
\label{fig:ur5_robotsp_seq3}}
\hfil
\subfloat[$\sigma_{RoboTSP}\lbrack5:6\rbrack $]{\includegraphics[width=0.49\columnwidth]{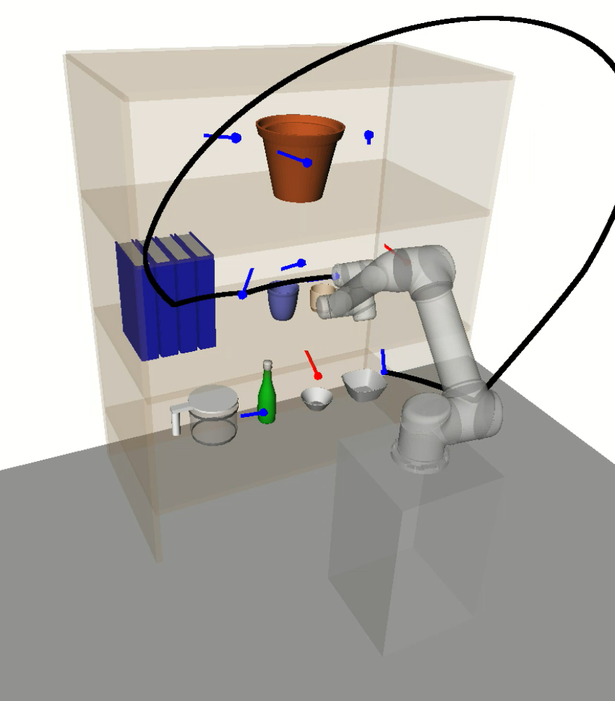}%
\label{fig:ur5_robotsp_seq5}}
\hfil
\subfloat[$\sigma_{RoboTSP}\lbrack7:9\rbrack $]{\includegraphics[width=0.49\columnwidth]{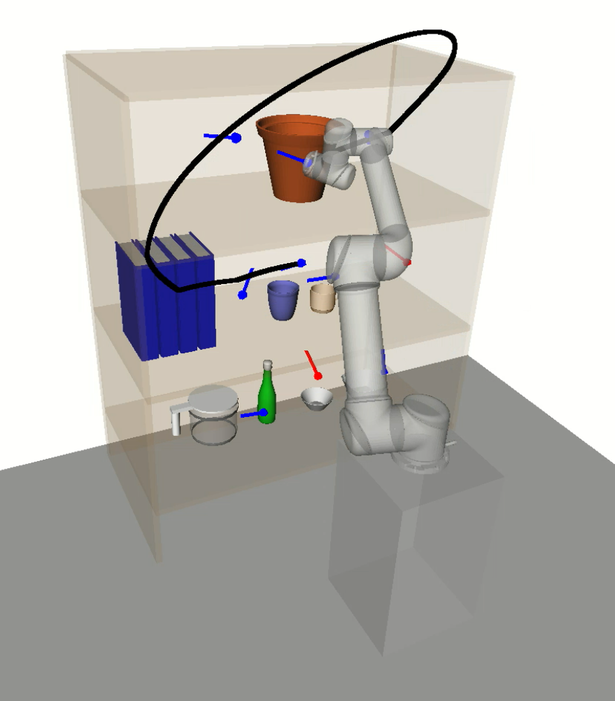}%
\label{fig:ur5_robotsp_seq7}}
\hfil
\subfloat[$\sigma_{RoboTSP}\lbrack9:10\rbrack $]{\includegraphics[width=0.49\columnwidth]{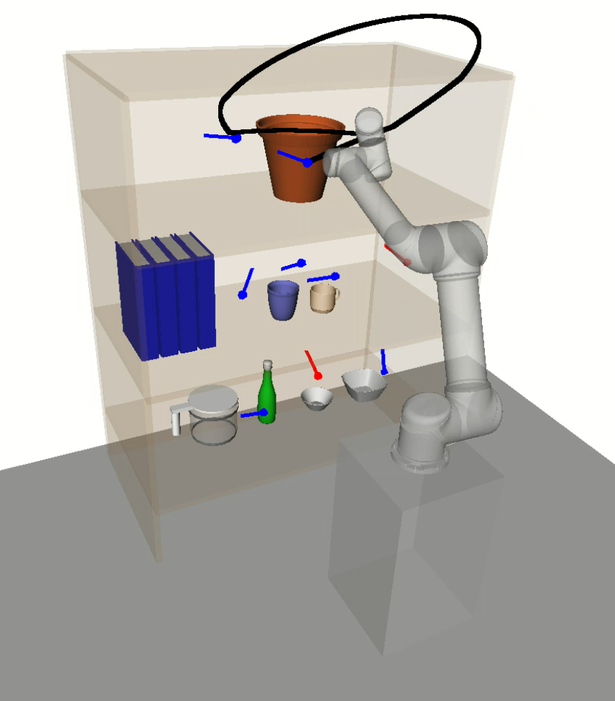}%
\label{fig:ur5_robotsp_seq8}}
\hfil
\caption{Example subset of a RoboTSP trajectory sequence for UR5 bookshelf experiment. The trajectories for the tasks shown highlight the issue of decoupling task and configuration space during sequencing; short paths in tasks space are not necessarily so in configuration space. Red poses indicated failed plans.}
\label{fig:ur5_robotsp_seq}
\end{figure}

\begin{figure}[!ht]
\centering
\subfloat[$\sigma_{ClusterRTSP}\lbrack6:7\rbrack $]{\includegraphics[width=0.49\columnwidth]{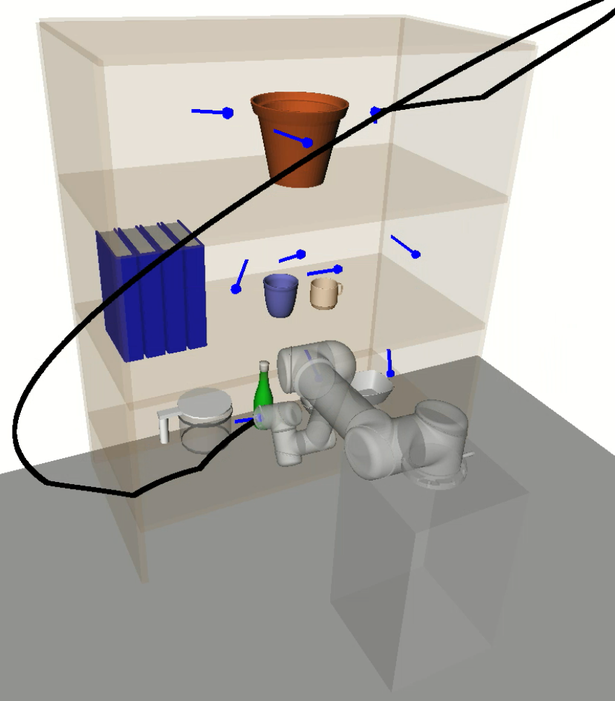}
\label{fig:ur5_cluster_seq7}}
\hfil
\subfloat[$\sigma_{ClusterRTSP}\lbrack8:9\rbrack $]{\includegraphics[width=0.49\columnwidth]{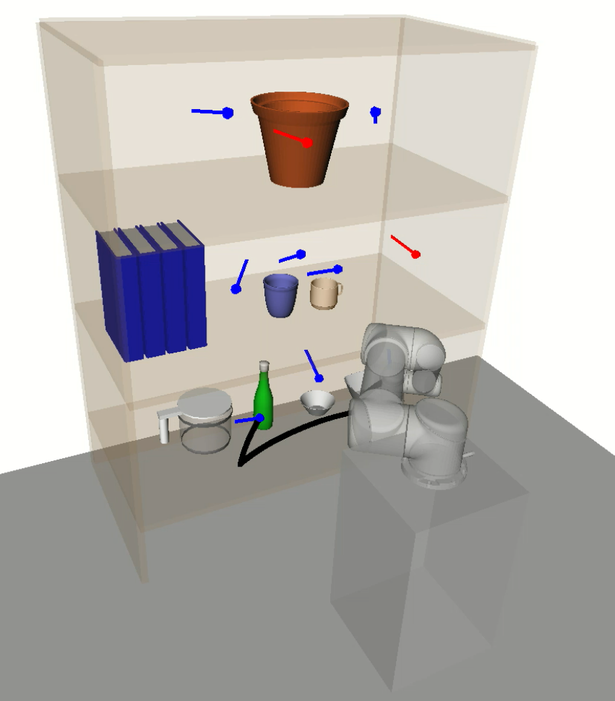}%
\label{fig:ur5_cluster_seq9}}
\hfil
\caption{Example subset of a Cluster-RTSP trajectory sequence for UR5 bookshelf experiment. A large change in joint configuration occurs mid sequence and motion planning fails to two subsequent tasks (red poses).}
\label{fig:ur5_clustertsp_seq}
\end{figure}

\begin{figure}[!h]
\centering
\subfloat[$\sigma_{HAP}\lbrack0:1\rbrack, \theta^0 $]{\includegraphics[width=0.15\textwidth]{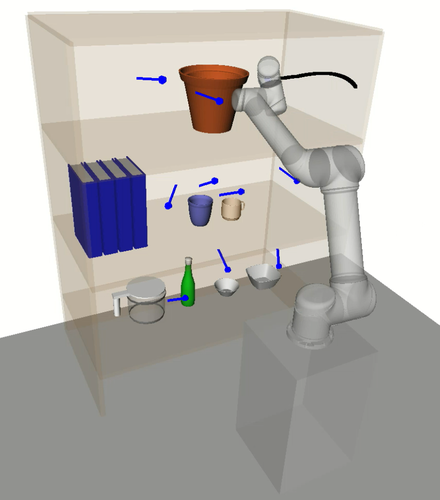}%
\label{fig:ur5_hap_seq1}}
\hfil
\subfloat[$\sigma_{HAP}\lbrack1:2\rbrack, \theta^0 $]{\includegraphics[width=0.15\textwidth]{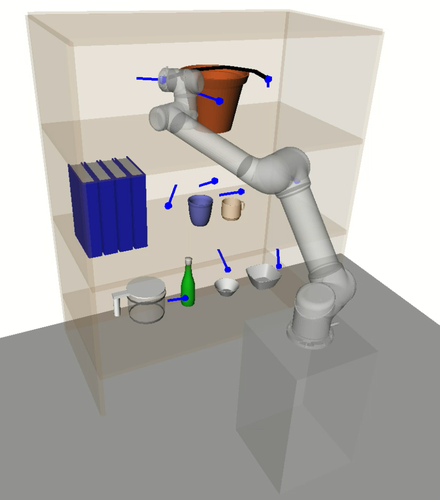}%
\label{fig:ur5_hap_seq2}}
\hfil
\subfloat[$\sigma_{HAP}\lbrack2:3\rbrack, \theta^0 $]{\includegraphics[width=0.15\textwidth]{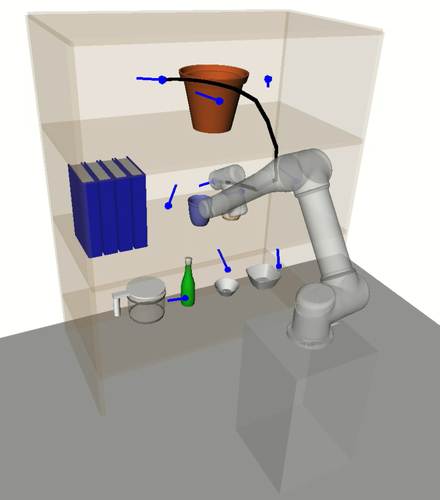}%
\label{fig:ur5_hap_seq3}}
\hfil
\subfloat[$\sigma_{HAP}\lbrack3:4\rbrack, \theta^0 $]{\includegraphics[width=0.15\textwidth]{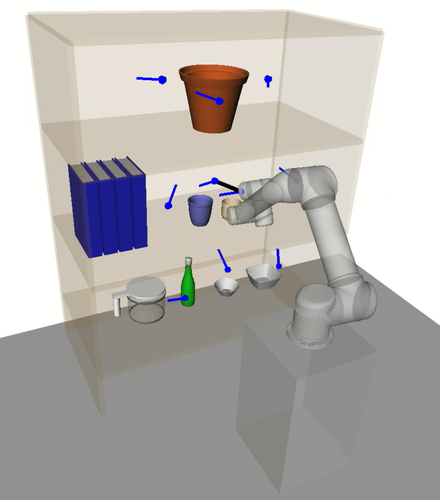}%
\label{fig:ur5_hap_seq4}}
\hfil
\subfloat[$\sigma_{HAP}\lbrack4:5\rbrack, \theta^0 $]{\includegraphics[width=0.15\textwidth]{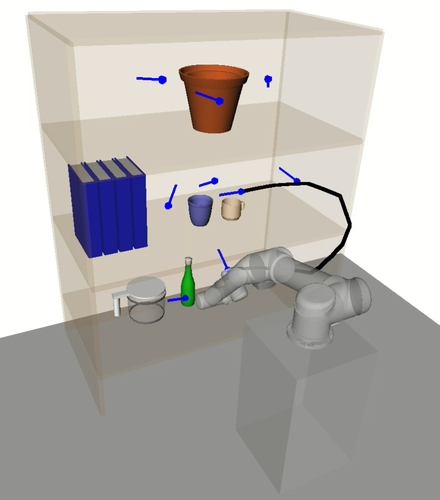}%
\label{fig:ur5_hap_seq5}}
\subfloat[$\sigma_{HAP}\lbrack5:0\rbrack, \theta^1 $]{\includegraphics[width=0.15\textwidth]{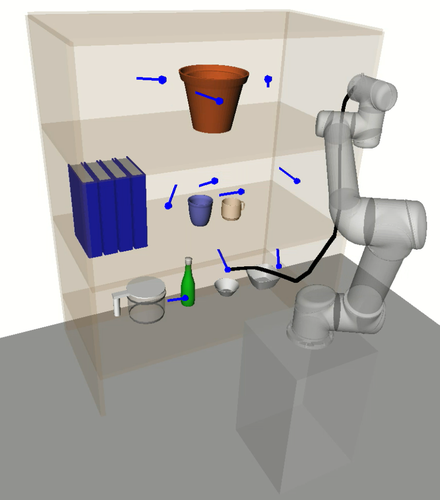}%
\label{fig:ur5_hap_seq6}}
\hfil
\subfloat[$\sigma_{HAP}\lbrack0:6\rbrack, \theta^1 $]{\includegraphics[width=0.15\textwidth]{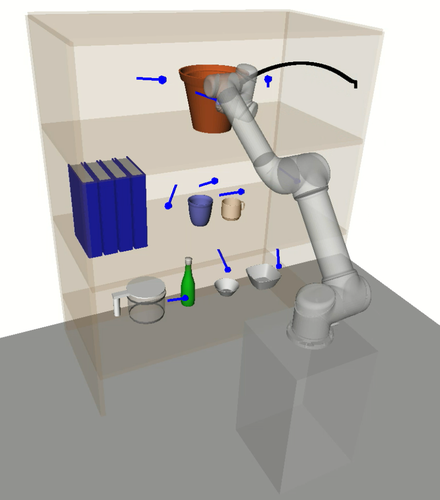}%
\label{fig:ur5_hap_seq7}}
\hfil
\subfloat[$\sigma_{HAP}\lbrack6:7\rbrack, \theta^1 $]{\includegraphics[width=0.15\textwidth]{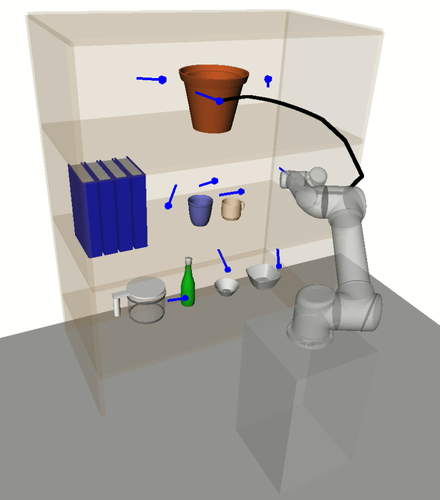}%
\label{fig:ur5_hap_seq8}}
\hfil
\subfloat[$\sigma_{HAP}\lbrack7:8\rbrack, \theta^1 $]{\includegraphics[width=0.15\textwidth]{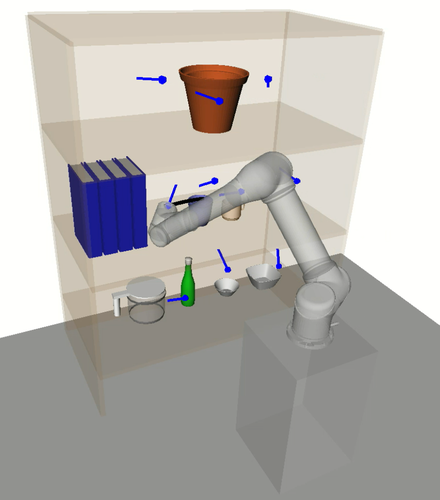}%
\label{fig:ur5_hap_seq9}}
\hfil
\subfloat[$\sigma_{HAP}\lbrack8:0\rbrack, \theta^1 $]{\includegraphics[width=0.15\textwidth]{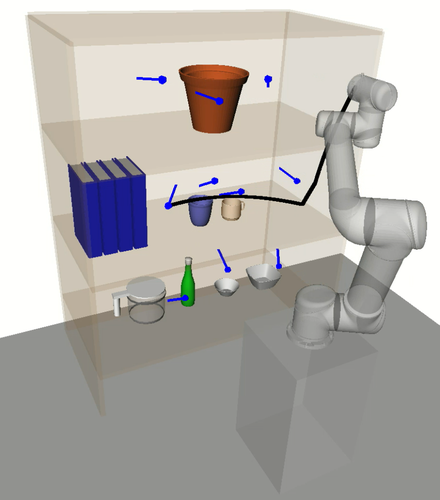}%
\label{fig:ur5_hap_seq10}}
\hfil
\subfloat[$\sigma_{HAP}\lbrack0:9\rbrack, \theta^2 $]{\includegraphics[width=0.15\textwidth]{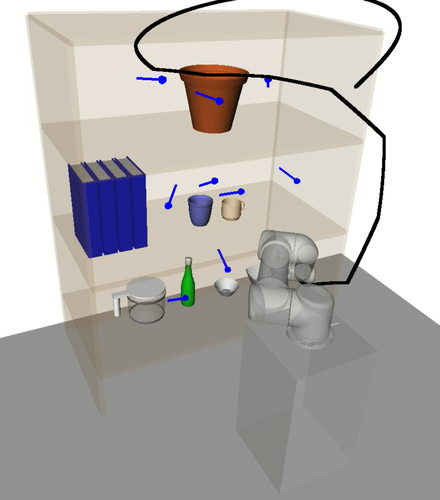}%
\label{fig:ur5_hap_seq11}}
\hfil
\subfloat[$\sigma_{HAP}\lbrack9:10\rbrack, \theta^2 $]{\includegraphics[width=0.15\textwidth]{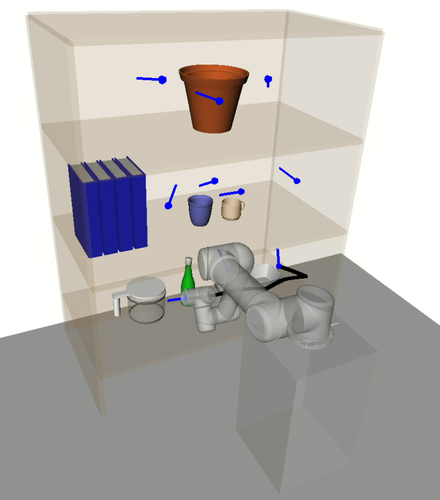}%
\label{fig:ur5_hap_seq12}}
\hfil
\subfloat[$\sigma_{HAP}\lbrack10:0\rbrack, \theta^2 $]{\includegraphics[width=0.15\textwidth]{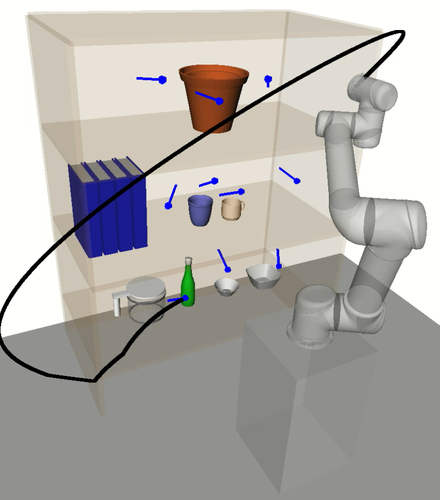}%
\label{fig:ur5_hap_seq13}}
\hfil
\caption{Example HAP trajectory sequence for UR5 bookshelf experiment. The assigned $\epsilon$-GHA indexes for each task pair are shown to highlight when subspace switching occurs. Tasks are all successfully planned to and trajectories between intra-subspace tasks appear to be consistently short and smooth. Tasks requiring a large change around the shoulder joint are grouped into $\theta^2$ and executed together in \protect\subref{fig:ur5_hap_seq11}-\protect\subref{fig:ur5_hap_seq13}.}
\label{fig:ur5_hap_seq}
\end{figure}

In this section we discuss the experimental results and provide an analysis on HAP's performance compared to the baselines. A key benefit of HAP is its ability to effectively reason about low-level motion during task sequencing.
A notable supporting observation of this is that HAP's task sequences tended to minimise movement between bookshelf rows. This is in contrast to the baselines which ignore the cost of avoiding collision with the shelves and objects, leading to frequent switching between rows and resulting in inefficient trajectory sequences and higher planning failure rates. See Appendix~\ref{app:multimedia} for supplementary videos with example experiments.

More specifically to RoboTSP, its decoupled sequencing approach fails to reason about the non-linear relationship between task space and configuration space; that is, short paths in task space are not necessarily so in configuration space. This is evidenced in the execution time plot for the UR5 experiments (Fig.~\ref{fig:ur5_exec_time}) where it is expected that average execution times should decrease as the number of tasks increase due to the increasing spatial density of the tasks; however, the opposite is observed for RoboTSP. An example consequential behaviour is depicted in Fig.~\ref{fig:ur5_robotsp_seq} where unnecessarily long trajectories are executed.

Cluster-RTSP performs overall better than RoboTSP in the UR5 experiments. Compared to HAP, however, trajectory jerks and planning time and success rates were overall worse. A contributor of this, as earlier claimed, is that outlier tasks with highly dissimilar IK solutions can have an overall negative impact on the trajectory sequence, as depicted in Fig.~\ref{fig:ur5_clustertsp_seq}. This can be explained by the initial unique configuration assignment step which attempts to find configurations that are all close to each other.

In contrast, the $\epsilon$-GHAs computed by HAP account for nonlinearities by considering prior knowledge such as the manipulator's kinematic structure and environment obstacles.
This is exemplified in Fig.~\ref{fig:ur5_hap_seq} where trajectories between intra-subspace tasks appear to be consistently short and smooth. Furthermore, tasks requiring a large change in configuration are grouped into the same subspace and executed together without negatively affecting other tasks in the sequence.

Enabling the base of the UR5 robot to be mobile further enhanced performance due to the greater flexibility afforded by the diverse subspace assignments (see Figs.~\ref{fig:ur5_subspaces}-\ref{fig:ur5_mobile_subspaces}). Emergent behaviour such as translating the base away from poses in the lower shelf in order to avoid self-collision were observed. See Appendix~\ref{app:multimedia} for a supplementary video demonstrating this.

While the above effects are less pronounced in the Sawyer experiments due to its kinematic redundancy (see Fig.~\ref{fig:reachable_comparison}), Cluster-RTSP's performance was notably worse than RoboTSP and HAP. This can be explained by the algorithm's inability to effectively reason over the larger number of redundant IK solutions afforded by the arm's extra DOF. Similar findings were reported in the original paper~\cite{wong2020novel}.

Finally, one may suspect that HAP produces similar goal configurations to the baselines and the seed trajectories retrieved from the $\epsilon$-GHA mappings led to higher success rates. However, we elucidate this claim by showing that HAP-no-prior still performs markedly better even without informed trajectory priors.

\section{CONCLUSIONS AND FUTURE WORK}
We have presented a new multi-query task sequencing framework for robotic manipulators that is designed to perform efficiently in practice given a user-defined task space. The framework computes a task-space decomposition that quickly produces efficient task sequences and motion plans online. The decomposition is constructed by finding a set of subspaces with associated $\epsilon$-Gromov-Hausdorff approximations that guarantee short paths of bounded length which also can be concatenated smoothly.

We present theoretical analyses and extensive empirical evaluations. We evaluate our framework with several kinematic configurations over long task sequences in a complex bookshelf environment. Results showed notable performance improvement over state-of-the-art baseline methods in planning time, planning success rate, and smoothness measured by jerk. This highlights the importance of reasoning about the low-level motion of the manipulator during sequencing which HAP facilitates in a computationally efficient manner.

\subsection{Limitations}
Here we discuss limitations of our framework and the current implementation. By reducing the search space of the planner, we intentionally trade off flexibility in the range of solutions for computation speed. However, this may lead to pruning of potentially optimal solutions. One has some control over this by, for example, utilising an increasing number of redundant $\epsilon$-GHAs to cover more of the configuration space. In the limit the full solution space can be recovered.

Furthermore, the $\epsilon$-GHA objective in~\eqref{eqn:mod_dijk_obj} is a combination of task-space coverage and degree-of-path preservation which is not directly related to the task sequencing objective; the total sequence cost. Thus, it is difficult to draw any guarantees on the overall path sequence optimality. One potential work around is to store a larger number of subspaces with varying coverage of the configuration space and select a subset of these based on the online scenario.

The online planning process additionally introduces suboptimality to the overall sequence cost. For example, tasks are assigned greedily to a subspace, subspace groups are executed in arbitrary order, intra-subspace sequences are subject to the suboptimality bounds of the TSP solver and the utilisation of a fixed home configuration for transitioning between subspaces. We chose these as they were straight-forward implementations and in practice performed well in our problem setting. In general, we envision many potential approaches to utilising the $\epsilon$-GHA mappings and our online planning method is just one such approach rather than a fundamental component of the framework.

\subsection{Future Work}

Our results and limitations discussed above motivate several important avenues of future work. For example, formulating the $\epsilon$-GHA objective in a way that directly considers the task sequence objective would be an interesting extension. Regarding the current implementation, HAP consists of several hyperparameters that may be cumbersome to tune. Automatic tuning of these could help improve usability. For example, one could train a learning model to predict $\epsilon$ based on the robot model, task space and environment.

It would be interesting to explore methods that would adapt the subspaces online in response to changes in the environment, potentially using online domain adaptation techniques~\cite{tompkins2020online} and conditional density estimation techniques~\cite{zhi2021trajectory}.
While we focus on task sequencing in this work, $\epsilon$-GHAs are potentially useful for other applications. For example, local reactive controllers such as RMPflow~\cite{cheng2021rmpflow} would benefit from knowledge of which regions of the task space are approximate isomorphisms to configuration space. 

Another interesting avenue of future work is to compute the maps using a continuous representation, for example a Gaussian process, while maintaining the bounded suboptimality guarantees.  Additionally, applying HAP to other embodiments such as high-DOF humanoids presents interesting challenges.

\bibliographystyle{IEEEtran}
\balance 
\bibliography{references}

\appendix

\subsection
{Multimedia}\label{app:multimedia}
This article includes videos showcasing example task sequences generated by our method, HAP, and baselines, RoboTSP and Cluster-RTSP. UR5 bookshelf experiments are shown in the video titled ``Supplementary Video 1". Sawyer bookshelf experiments are shown in the video titled ``Supplementary Video 2".

\subsection{Notation}

\begin{table}[!h] 
\caption{Table of Notation}
\begin{tabular}{ p{1cm} p{7cm} } 
\toprule
  $\mathcal{C}$ & Configuration space of the robot.\\
  $\mathcal{T}$ & Task space of the robot (set of valid end-effector poses in $SE(3)$).\\
  $\hat{T}$ & Discretised task space used to compute $\epsilon$-GHAs.\\
  $t$ & A task (6D pose in $SE(3)$). \\
  $T$ & Set of online tasks.\\
  $A$ & Robot model. \\
  $q$ & A robot configuration. \\
  $Q(t)$ & Set of valid IK solution for task $t$.\\
  $\hat{m}$ & Approximate model of environment obstacles (offline).\\
  $\pi$ & Path of robot (sequence of configurations). \\
  $\Pi$ & Set of all possible paths between two tasks (due to multiple IK solutions per task).\\
  $d_C$ & Configuration space distance metric.\\
  $d_T$ & Task space distance metric.\\
  $\theta$ & Denotes a unique mapping from task to configuration space.\\
  $G$ & Undirected graph used for computing $\epsilon$-GHAs (constructed from $\hat{T}$ and $\hat{E}$ \\
  $\hat{E}$ & Edges formed between nodes $T$ based on neighbouring distance condition, e.g. within specific radius. \\
  $E$ & $\epsilon$-GHAs are subgraphs of $G$ and consist of $\theta$ and corresponding edges $E$. \\
  $T_{open}$ & Subset of tasks in $\hat{T}$ without an assigned mapping in any $\theta^i$. \\
  $g$ & Minimum cost path between two tasks given $\theta$.\\
  $J$ & Optimisation objective for computing $\theta$ \\
  
\bottomrule
\label{table:notation}
\end{tabular}
\end{table}

\subsection{HAP Task-space Decomposition for Bookshelf Experiments}\label{sec:multiple_HAs}

Here, illustrations of HAP's subspace allocation process are shown. In Fig.~\ref{fig:reachable_comparison}, a visualisation of the 7-DOF Sawyer arm's naturally extended task-space coverage compared to the 6-DOF UR5 model is shown. This increased task-space coverage is due to the redundant kinematic configuration of the 7-DOF arm.

In Figs.~\ref{fig:ur5_subspaces}\subref{fig:ur5_subspace1}-\subref{fig:ur5_subspace5} each visualisation shows a new subspace found in an iteration of Alg.~\ref{alg:mod_dijkstra} for the UR5 with base mobility disabled. With each iteration, the overall coverage of $\hat{T}$ is increased.
Overlap between subspaces in task space is beneficial as it provides additional IK solutions to choose from during online planning. Subspaces found for the UR5 with base mobility now enabled are visualised in Figs.~\ref{fig:ur5_mobile_subspaces}\subref{fig:ur5_mobile_subspace1}-\subref{fig:ur5_mobile_subspace5} in the order they are generated by HAP.
Subspace boundaries are not always obvious and would be difficult to generate manually.

\begin{figure*}[b]
\centering
\subfloat[$\hat{T}^0$ for 7-DOF Sawyer]{\includegraphics[width=0.21\textwidth]{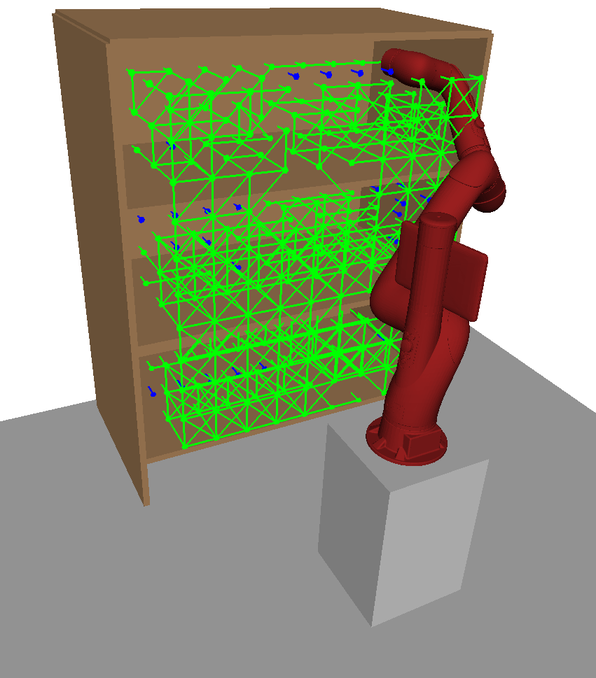}%
\label{fig:sawyer_reachable}}
\hfil
\subfloat[$\hat{T}^0$ for 6-DOF UR5]{\includegraphics[width=0.21\textwidth]{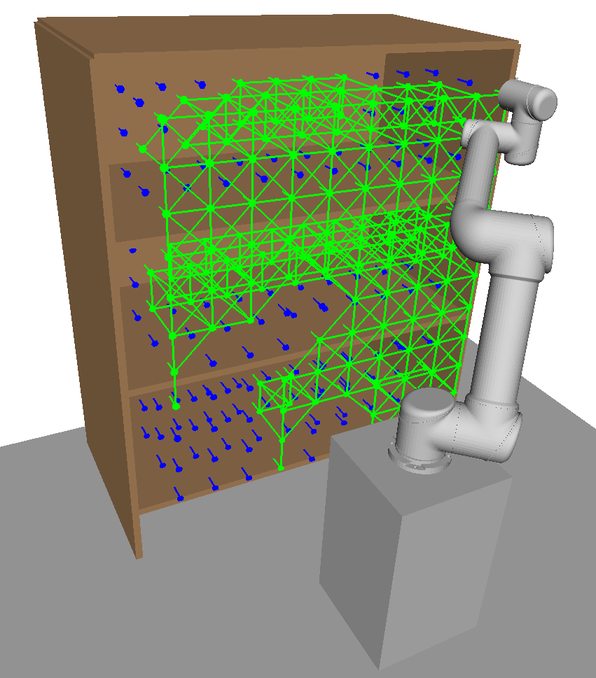}%
\label{fig:ur5_reachable}}
\hfil
\caption{Comparison of first subspaces generated for~\protect\subref{fig:sawyer_reachable} the 7-DOF Sawyer and~\protect\subref{fig:ur5_reachable} the 6-DOF UR5. Green poses and edges indicate regions of task space that lie within the defined subspace; blue poses indicate the remaining unmapped regions. Arms are shown in their corresponding mapped configuration. The 7-DOF Sawyer arm is capable of almost full task-space coverage with a single subspace, in contrast to the UR5, due to its kinematic redundancy.}
\label{fig:reachable_comparison}
\end{figure*}

\begin{figure*}[b]
\centering
\subfloat[$\hat{T}^0$]{\includegraphics[width=0.19\textwidth]{Images/subspaces/ur5/ur5_subspace1_resized.png}%
\label{fig:ur5_subspace1}}
\hfil
\subfloat[$\hat{T}^1$]{\includegraphics[width=0.19\textwidth]{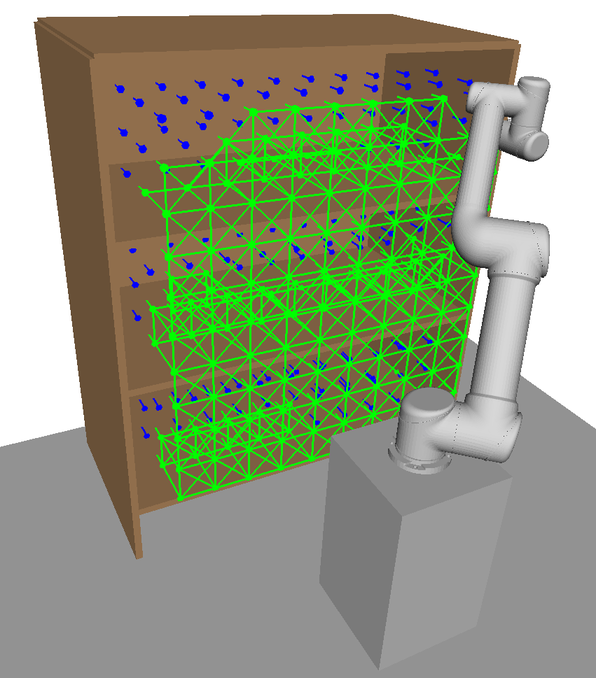}%
\label{fig:ur5_subspace2}}
\hfil
\subfloat[$\hat{T}^2$]{\includegraphics[width=0.19\textwidth]{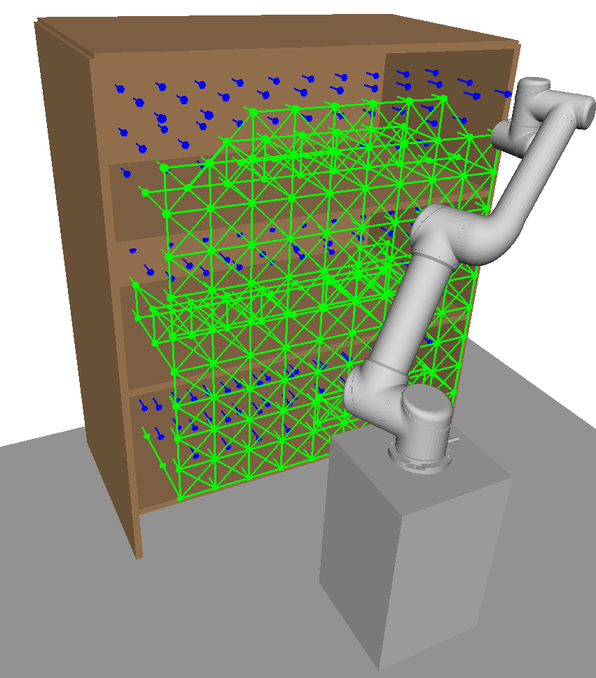}%
\label{fig:ur5_subspace3}}
\hfil
\subfloat[$\hat{T}^3$]{\includegraphics[width=0.19\textwidth]{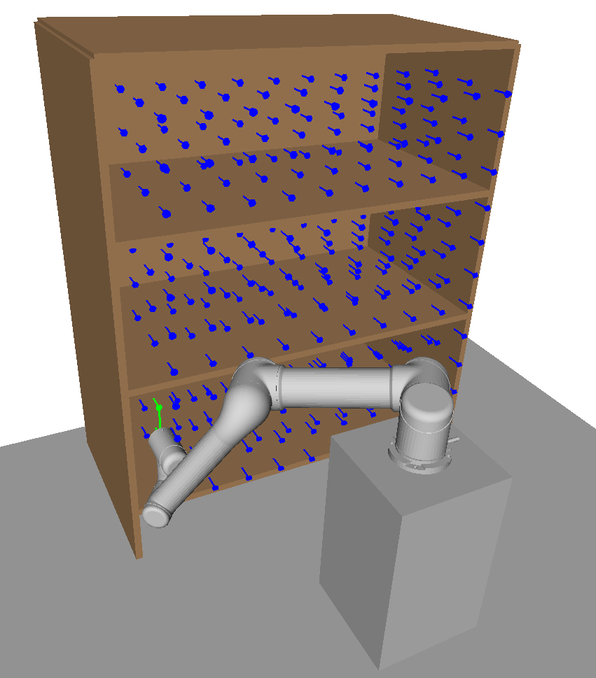}%
\label{fig:ur5_subspace4}}
\hfil
\subfloat[$\hat{T}^4$]{\includegraphics[width=0.19\textwidth]{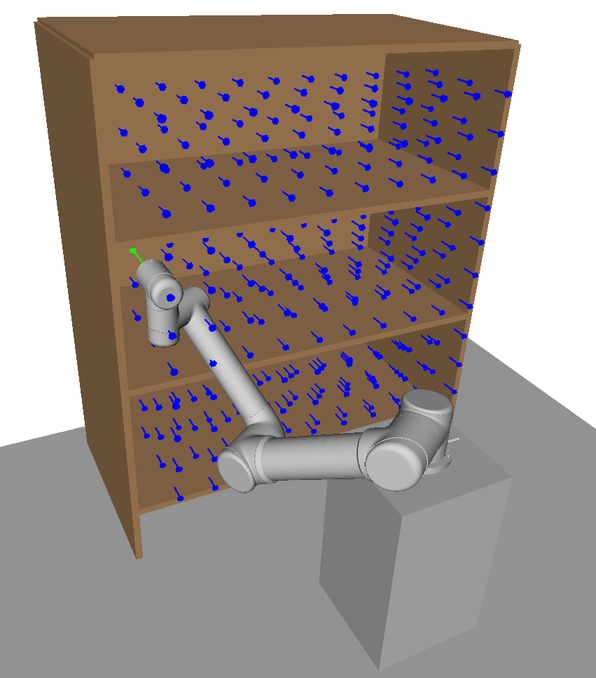}%
\label{fig:ur5_subspace5}}
\hfil
\caption{Visualisation of task-space subspaces for UR5 with static base. Green and blue poses are defined as in the previous figure. Arms are shown in their corresponding mapped configuration. Subspaces are sorted by order in which they were found by Alg.~\ref{alg:gen_database}. Subspaces in \protect\subref{fig:ur5_subspace1}~and~\protect\subref{fig:ur5_subspace2} achieve large and diverse coverage, while the subspace in~\protect\subref{fig:ur5_subspace3} achieves slightly better coverage of the right side of the bookshelf with the shown shoulder orientation~\protect\subref{fig:ur5_subspace2}. Subspaces in~\protect\subref{fig:ur5_subspace4}~and~\protect\subref{fig:ur5_subspace5} cover only small isolated regions of the task space in the left bottom and middle shelf rows, respectively (note that these are removed for the experiments).}
\label{fig:ur5_subspaces}
\end{figure*}

\begin{figure*}[b]
\centering
\subfloat[$\hat{T}^0$]{\includegraphics[width=0.19\textwidth]{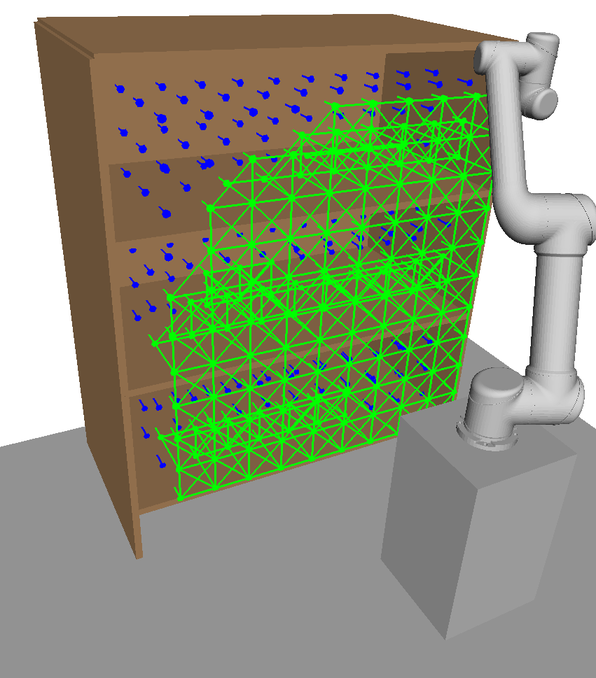}%
\label{fig:ur5_mobile_subspace1}}
\hfil
\subfloat[$\hat{T}^1$]{\includegraphics[width=0.19\textwidth]{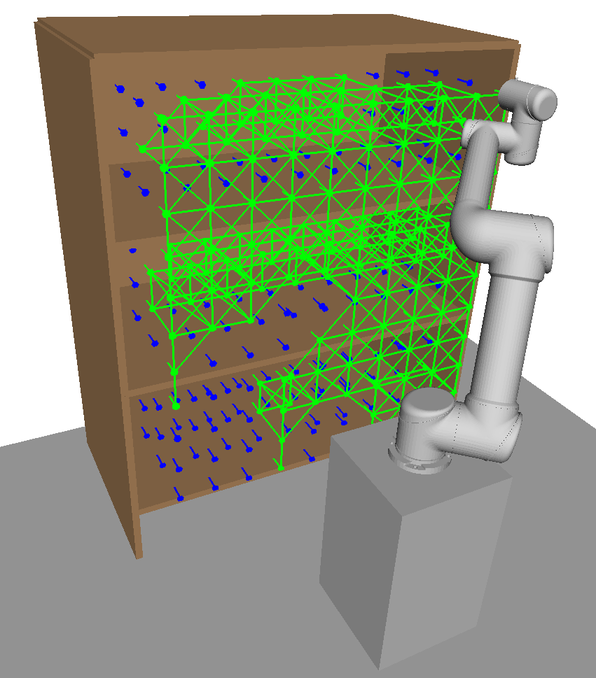}%
\label{fig:ur5_mobile_subspace2}}
\hfil
\subfloat[$\hat{T}^2$]{\includegraphics[width=0.19\textwidth]{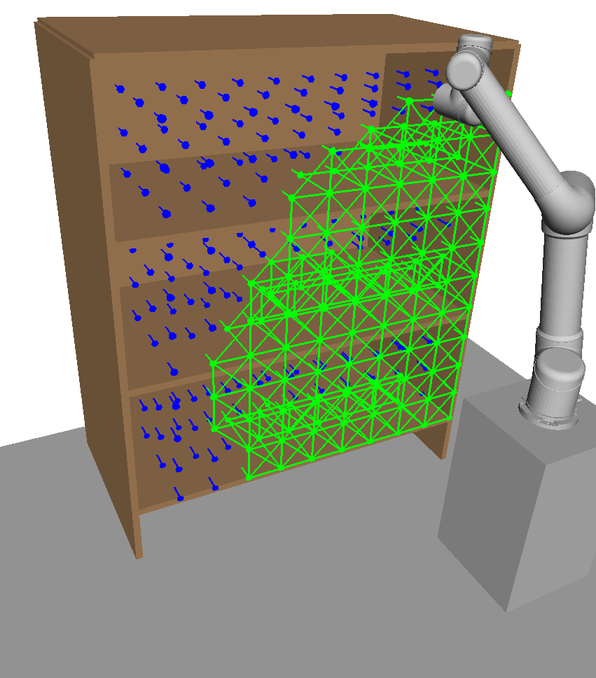}%
\label{fig:ur5_mobile_subspace3}}
\hfil
\subfloat[$\hat{T}^3$]{\includegraphics[width=0.19\textwidth]{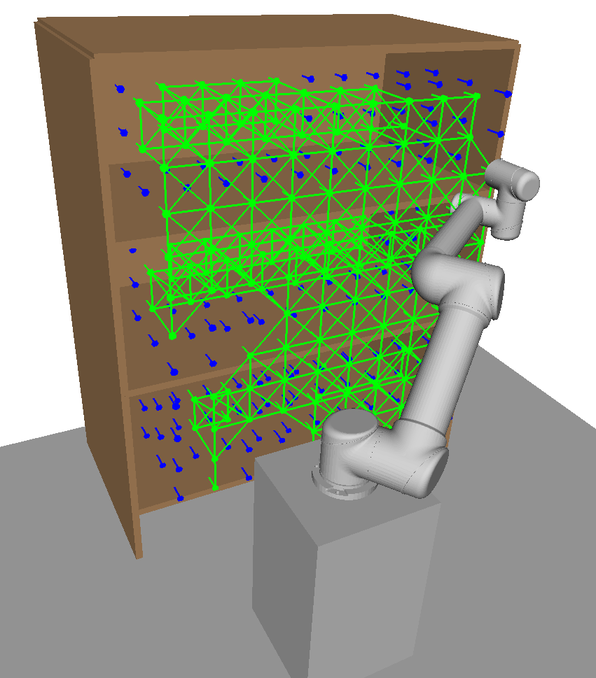}%
\label{fig:ur5_mobile_subspace4}}
\hfil
\subfloat[$\hat{T}^4$]{\includegraphics[width=0.19\textwidth]{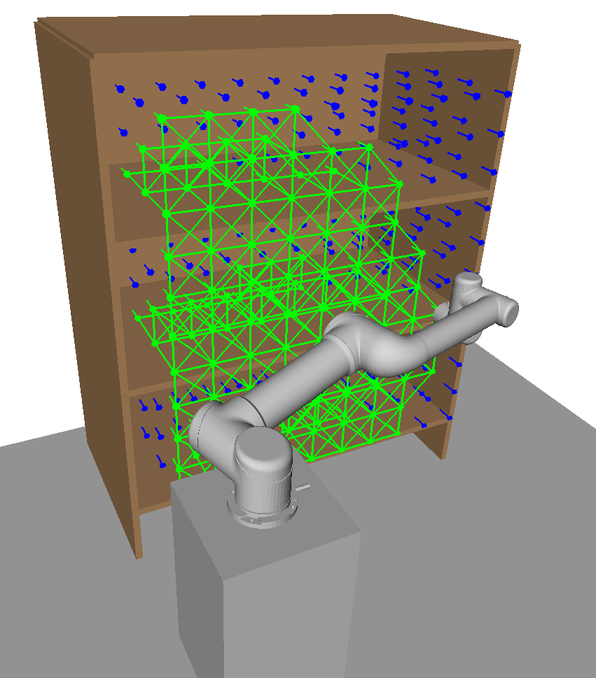}%
\label{fig:ur5_mobile_subspace5}}
\hfil
\caption{Visualisation of task-space subspaces for HAP-Mobile. Note the base pose changes for each subspace. Subspaces are sorted by order in which they were found by Alg.~\ref{alg:gen_database}. Subspaces exhibit large and diverse coverage for all base positions.}
\label{fig:ur5_mobile_subspaces}
\end{figure*}

\end{document}